\newenvironment{keywords}{\centerline{\bf\small Keywords}\begin{quote}\small}{\par\end{quote}\vskip 1ex}
  \newdimen\paravsp  
\def\paradot#1{\vspace{\paravsp plus 0.5\paravsp minus 0.5\paravsp}\noindent{\bf\boldmath{#1.}}} 
\def\paradoto#1{\vspace{\paravsp plus 0.5\paravsp minus 0.5\paravsp}\noindent{\bf\boldmath{#1.}}} 
\def\,{\mskip 3mu} \def\>{\mskip 4mu plus 2mu minus 4mu} \def\;{\mskip 5mu plus 5mu} \def\!{\mskip-3mu}
\newtheorem{theorem}{Theorem}
\newtheorem{lemma}[theorem]{Lemma}
\newenvironment{proof}{{\noindent\bf Proof.}}{\vskip 1ex}
\def\hrefurl#1{\href{#1}{\rule{0ex}{1.7ex}\color{blue}\underline{\smash{#1}}}} 
\def\nq{\hspace{-1em}}          
\def\fr#1#2{{\textstyle\frac{#1}{#2}}} 
\def\frs#1#2{{^{#1}\!/\!_{#2}}} 
\def\qed{\hspace*{\fill}\rule{1.4ex}{1.4ex}$\quad$\\} 
\def\v{\boldsymbol}             
\def\m#1{{\bf #1}}              
\def\trp{{\!\top\!}}            
\def\cC{\mathcal{C}}
\def\cF{\mathcal{F}}
\def\cG{\mathcal{G}}
\def\cS{\mathcal{S}}
\def\cX{\mathcal{X}}
\def\cY{\mathcal{Y}}
\def\MLP{\text{MLP}}
\def\Poly{\text{Poly}}
\begin{document}

\title{\vspace{-4ex}
\normalsize\sc 
\vskip 2mm\bf\Large\hrule height5pt \vskip 4mm
On Representing (Anti)Symmetric Functions%
\vskip 4mm \hrule height2pt
}
\author{{\bf Marcus Hutter}\\[3mm]
\normalsize DeepMind \& ANU\\[2mm]
\normalsize \hrefurl{http://www.hutter1.net/}}
\date{12 Jun 2020}
\maketitle

\begin{abstract}
Permutation-invariant, -equivariant, and -covariant functions and anti-symmetric functions
are important in quantum physics, computer vision, and other disciplines.
Applications often require most or all of the following properties:
(a) a large class of such functions can be approximated, e.g.\ all continuous function,
(b) \emph{only} the (anti)symmetric functions can be represented,
(c) a fast algorithm for computing the approximation,
(d) the representation itself is continuous or differentiable,
(e) the architecture is suitable for learning the function from data.
(Anti)symmetric neural networks have recently been developed and applied with great success.
A few theoretical approximation results have been proven,
but many questions are still open, 
especially for particles in more than one dimension
and the anti-symmetric case, which this work focusses on.
More concretely, we derive natural polynomial approximations in the symmetric case,
and approximations based on a \emph{single} generalized Slater determinant in the anti-symmetric case.
Unlike some previous super-exponential and discontinuous approximations,
these seem a more promising basis for future tighter bounds.
We provide a complete and explicit
universality proof of the Equivariant MultiLayer Perceptron,
which implies universality of symmetric MLPs and the FermiNet.

\vspace{5ex}\def\contentsname{\centering\normalsize Contents}\setcounter{tocdepth}{1}
{\parskip=-2.7ex\tableofcontents}
\end{abstract}

\begin{keywords}
Neural network, approximation, universality, Slater determinant, Vandermonde matrix, 
equivariance, symmetry, anti-symmetry, symmetric polynomials, polarized basis, multilayer perceptron, continuity, smoothness, ...
\end{keywords}

\section{Introduction}\label{sec:Intro}

\paradoto{Neural networks}
Neural Networks (NN), or more precisely, Multi-Layer Perceptrons (MLP), 
are universal function approximators \cite{Pinkus:99}
in the sense that every (say) continuous function can be approximated arbitrarily well by a sufficiently large NN.
The true power of NN though stems from the fact that they apparently have a bias towards functions we care about
and that they can be trained by local gradient-descent or variations thereof.

\paradoto{Covariant functions}
For many problems we have additional information about the function,
e.g.\ symmetries under which the function of interest is invariant or covariant.
Here we consider functions that are covariant%
\footnote{In full generality, a function $f:\cX→\cY$ is covariant under group operations $g∈G$,
if $f(R_g^X(x))=R_g^Y(f(x))$, where $R_g^X:\cX→\cX$ and $R_g^Y:\cY→\cY$ are representations of group (element) $g∈G$.}
under permutations.\footnote{The symmetric group $G=S_n$ is the group of all the permutations $π:\{1,...,n\}→\{1,...,n\}$.}
Of particular interest are functions that are invariant%
\footnote{$R_g^Y$=Identity. Permutation-invariant functions
are also called `totally symmetric functions' or simply `symmetric function'.}, 
equivariant\footnote{General $\cY$ and $\cX$, often $\cY=\cX$ and $R_g^Y=R_g^X$, also called \emph{covariant}.}, or 
anti-symmetric\footnote{$R_g^Y=±1$ for even/odd permutations.} under permutations.
Of course (anti)symmetric functions are also just functions,
hence a NN of sufficient capacity can also represent (anti)symmetric functions,
and if trained on an (anti)symmetric target could converge to an (anti)symmetric function.
But NNs that can represent \emph{only} (anti)symmetric functions are desirable for multiple reasons.
Equivariant MLP (EMLP) are the basis for constructing symmetric functions by simply summing the output of the last layer,
and for anti-symmetric (AS) functions by multiplying with Vandermonde determinants 
or by computing their generalized Slater determinant (GSD).

\paradoto{Applications}
The most prominent application is in quantum physics which represents systems of identical 
(fermions) bosons with (anti)symmetric wave functions \cite{Pfau:19}.
Another application is classification of point clouds in computer vision,
which should be invariant under permutation of points \cite{Zaheer:18}.

\paradoto{Exact (anti)symmetry}
Even if a general NN can learn the (anti)symmetry, it will only do so approximately,
but some applications require exact (anti)symmetry, 
for instance in quantum physics to guarantee upper bounds on the true ground state energy \cite{Pfau:19}.
This has spawned interest in NNs that can represent \emph{only} (anti)symmetric functions \cite{Zaheer:18,Han:19}.
A natural question is whether such NNs can represent \emph{all} reasonable (anti)symmetric functions,
which is the focus of this paper.
We will answer this question for the (symmetric) EMLP \cite{Zaheer:18}
and for the (AS) FermiNet \cite{Pfau:19}.

\paradoto{Desirable properties}
Approximation architectures need to satisfy a number of criteria to be practically useful:
\begin{itemize}\parskip=0ex\parsep=0ex\itemsep=0ex
\item[(a)] they can approximate a large class of functions, \\
           e.g.\ all continuous (anti)symmetric functions,
\item[(b)] \emph{only} the (anti)symmetric functions can be represented,
\item[(c)] a fast algorithm exists for computing the approximation,
\item[(d)] the representation itself is continuous or differentiable,
\item[(e)] the architecture is suitable for learning the function from data \\
           (which we don't discuss).
\end{itemize}

\paradoto{Content}
Section~\ref{sec:Back} reviews existing approximation results for (anti)symmetric functions.
Section~\ref{sec:oneds} discusses various ``naive'' representations
(linear, sampling, sorting) and their (dis)advantages,
before introducing the ``standard'' solution that satisfies (a)-(e) 
based on algebraic composition of basis functions,
symmetric polynomials, and polarized bases.
For simplicity 
the section considers only totally symmetric functions of their $n$ real-valued inputs (the $d=1$ case), i.e.\ particles in one dimension.
Section~\ref{sec:oneda} proves the representation power of a single GSD for totally anti-symmetric (AS) functions (also $d=1$).
Technically we reduce the GSD to a Vandermonde determinant,
and determine the loss of differentiability due to the Vandermonde determinant.
From Sections~\ref{sec:threed} on we consider the general case of functions with $n⋅d$ inputs 
that are (anti)symmetric when permuting their $n$ $d$-dimensional input vectors.
The case $d=3$ is particularly relevant for particles and point clouds in 3D space.
The difficulties encountered for $d=1$ transfer to $d>1$, 
while the positive results don't, or only with considerable extra effort.
Section~\ref{sec:nn} reviews classical NN approximation theory as a preparation for 
Equivariant MLPs proven universal in Section~\ref{sec:covnn},
which are then used in Section \ref{sec:assnn} to prove universality of 
Symmetric MLPs and of the AS FermiNet.
Section~\ref{sec:Disc} concludes.
A list of notation can be found in Appendix~\ref{app:Notation}.

\paradoto{Remarks}
The equivariance construction in Section~\ref{sec:covnn} is long,
but the result and construction are rather natural.
But to the best of our knowledge no
proof has previously been published.
It may have tacitly been assumed that the universality of the polarized superposition Theorems~\ref{thm:palgbone}\&\ref{thm:palgb}
imply universality of the EMLP.
Section~\ref{sec:oneds} reviews various approaches to representing symmetric functions,
and is the broadest albeit very compact review we are aware of.
We also took care to avoid relying on results with inherently asymptotic or tabulation character,
to enable (in future work) good approximation rates for specific function classes, 
such as those with `nice' Fourier transform \cite{Barron:93,Makovoz:96},
The results about the FermiNet are non-trivial and unexpected.

\section{Related Work}\label{sec:Back}

\paradoto{NN approximation theory \cite{Pinkus:99,Lu:20}}
The study of universal approximation properties of NN has a long history,
see e.g.\ \cite{Pinkus:99} for a pre-millennium survey, 
and e.g.\ \cite{Lu:20} for recent results and references.
For (anti)symmetric NN such investigation has only recently begun
\cite{Zaheer:18,Wagstaff:19,Han:19,Sannai:19}.

\paradoto{Zaher\&al.(2018) \cite{Zaheer:18}}
Functions on sets are necessarily invariant under permutation,
since the order of set elements is irrelevant.
For countable domain, \cite{Zaheer:18} derive a general representation based on encoding domain elements
as bits into the binary expansion of real numbers.
They conjecture that the construction can be generalized to uncountable domains such as $ℝ^d$,
but it would have to involve pathological everywhere discontinuous functions \cite{Wagstaff:19}.
Functions on sets of fixed size $n$ are equivalent to symmetric functions in $n$ variables.
\cite{Zaheer:18} prove a symmetric version of Kolmogorov-Arnold's superposition theorem \cite{Kolmogorov:57}
(for $d=1$) based on elementary symmetric polynomials und using Newton's identities, 
also known as Girard-Newton or Newton-Girard formulae,
which we will generalize to $d>1$.
Another proof is provided based on homeomorphisms between vectors and ordered vectors,
also with no obvious generalization to $d>1$.
They do not consider AS functions.

\paradoto{Han\&al.(2019) \cite{Han:19}}
For symmetric functions and any $d≥1$, 
\cite{Han:19} provide two proofs of the symmetric superposition theorem of \cite{Zaheer:18}:
Every symmetric function can be approximated by symmetric polynomials,
symmetrized monomials can be represented as a permanents,
and Ryser's formula brings the representation into the desired polarized superposition form.
The down-side is that computing permanents is NP complete,
and exponentially many symmetrized monomials are needed to approximate $f$. 
The second proof discretizes the input space into a $n⋅d$-dimensional lattice
and uses indicator functions for each grid cell.
They then symmetrize the indicator functions,
and approximate $f$ by these piecewise constant symmetric indicator functions instead of polynomials,
also using Ryser formula for the final representation.
Super-exponentially many indicator functions are needed,
but explicit error bounds are provided.
The construction is discontinuous but they remark on how to make it continuous.
Approximating AS $f$ for $d≥1$ is based on a similar lattice construction, 
but by summing super-exponentially many Vandermonde determinants, leading to a similar bound.
We show that a single Vandermonde/Slater determinant suffices but without bound. 
Additionally for $d=1$ we determine the loss in smoothness this construction suffers from.

\paradoto{Sannei\&al.(2019) \cite{Sannai:19}}
\cite{Sannai:19} prove tighter but still exponential bounds if $f$ is Lipschitz w.r.t.\ $\ell^∞$
based on sorting which inevitably introduces irreparable discontinuities for $d>1$.

\paradoto{Pfau\&al.(2019) \cite{Pfau:19}}
The FermiNet \cite{Pfau:19} is also based on EMLPs \cite{Zaheer:18}
but anti-symmetrizes not with Vandermonde determinants but with GSDs.
It has shown remarkable practical performance for modelling the ground state of a variety of atoms and small molecules. 
To achieve good performance, a linear combination of GSDs has been used. 
We show that in principle a single GSD suffices, 
a sort of generalized Hartree-Fock approximation. 
This is contrast to the increasing number of conventional Slater determinants required for increasing accuracy.
Our result implies (with some caveats) that the improved practical performance of multiple GSDs
is due to a limited (approximation and/or learning) capacity of the EMLP,
rather than a fundamental limit of the GSD.

\section{One-Dimensional Symmetry}\label{sec:oneds}

This section reviews various approaches to representing symmetric functions,
and is the broadest review we are aware of. 
To ease discussion and notation, we consider $d=1$ in this section,
Most considerations generalize easily to $d>1$, 
some require significant effort, and others break.
We discuss various ``naive'' representations
(linear, sampling, sorting) and their (dis)advantages,
before introducing the ``standard'' solution that can satisfy (a)-(e).
All representations consist of a finite set of fixed (inner) basis functions,
which are linearly, algebraically, functionally, or otherwise combined.
We then provide various examples, including composition by inversion and 
symmetric polynomials, which can be used to prove the ``standard'' representation theorem for $d=1$.

\paradot{Motivation}
Consider $n∈ℕ$ one-dimensional particles with coordinates 
$x_i∈ℝ$ for particle $i=1,...,n$. In quantum mechanics the probability amplitude
of the ground state can be described by a real-valued joint 
wave function $χ(x_1,...,x_n)$. Bosons $ϕ$ have a totally symmetric
wave function: $ϕ(x_1,...,x_n)=ϕ(x_{π(1)},...,x_{π(n)})$
for all permutations $π∈S_n\subset\{1:n\}\to\{1:n\}$. 
Fermions $ψ$ have totally Anti-Symmetric (AS) wave functions: 
$ψ(x_1,...,x_n)=σ(π)ψ(x_{π(1)},...,x_{π(n)})$, where $σ(π)=±1$ is 
the parity or sign of permutation $π$. 
We are interested in representing or 
approximating all and only such (anti)symmetric functions by neural networks.
Abbreviate $\m x≡(x_1,...,x_n)$ and let $S_π(\m x):=(x_{π(1)},...,x_{π(n)})$
be the permuted coordinates.
There is an easy way to (anti)symmetrize any function,
\begin{align}\label{eq:sas}
  ϕ(\m x) ~=~ {1\over n!}\sum_{π∈S_n} χ(S_π(\m x)), ~~~~~ 
  ψ(\m x) ~=~ {1\over n!}\sum_{π∈S_n} σ(π)χ(S_π(\m x))
\end{align}
and any (anti)symmetric function can be represented in this form (proof: use $χ:=ϕ$ or $χ:=ψ$).
If we train a NN $χ:ℝ^n→ℝ$ to approximate some function $f:ℝ^n→ℝ$ to accuracy $ε>0$,
then $ϕ$ ($ψ$) are (anti)symmetric approximations of $f$ to accuracy $ε>0$ too,
provided $f$ itself is (anti)symmetric.
Instead of averaging, the minimum or maximum or median or many other compositions would also work,
but the average has the advantage that smooth $χ$ lead to smooth $ϕ$ and $ψ$,
and more general, preserves many desirable properties such as (Lipschitz/absolute/...) continuity,
($k$-times) differentiability, analyticity, etc.
It possibly has all important desirable property, but one:

\paradot{Time complexity}
The problem with this approach is that it has $n!$ terms, 
and evaluating $χ$ super-exponentially often is intractable even for moderate $n$,
especially if $χ$ is a NN.
There can also be no clever trick to linearly (anti)symmetrize arbitrary functions fast,
intuitively since the sum pools $n!$ independent regions of $χ$. 
Formally, consider the NP hard Travelling Salesman Problem (TSP):
Let $χ(\m x)=1$ if $x_i=π(i)∀i$ for some $π$ \emph{and} there is a path of length $≤L$
connecting cities in order $π(1)→π(2)→...→π(n)→π(1)$, and $χ(\m x)=0$ 
for all other $\m x$. 
Then $ϕ(1,2,...,n)>0$ iff there exists
a path of length $≤L$ connecting the cities in \emph{some} order. 
Hence $ϕ$ solves the TSP, so $ϕ\!\not\in$P unless P=NP.
For anti-symmetry, define $χ(\m x)=σ(π)$ if ...
The same argument works for min/max/median/...

\paradot{Sampling}
We could sample $O(1/ε^2)$ permutations to approximate $ϕ$ and potentially $ψ$ to accuracy $ε$,
but even 10'000 samples for 1\% accuracy is expensive,
and cancellations of positive and negative terms may require orders of magnitude more samples.
Furthermore, exactly (anti)symmetric wave functions are needed in quantum physics applications.
Even if sampling were competitive for evaluation, 
there may be a super-exponential representation (learning) problem:

\paradot{Learning}
The domain of an (anti)symmetric function consist of $n!$ identical regions. 
The function is the same (apart from sign) on all these regions.  
A general NN must represent the function separately on all these regions,
hence potentially requires $n!$ more samples than an intrinsically (anti)symmetric NN, 
unless the NN architecture and learning algorithm are powerful enough 
to discover the symmetry by themselves and merge all $n!$ regions onto the same internal representation. 
Maybe a NN trained to sort \cite{Wang:95} exhibits this property.
We are not aware of any general investigation of this idea.

\paradot{Function composition and bases}
Before delving into proving universality of the EMLP and the FermiNet,
it is instructive to first review the general concepts of function composition and basis functions,
since a NN essentially is a composition of basis functions.
We want to represent/decompose functions as $f(\m x)=g(\v{β}(\m x))$.
In this work we are interested in symmetric $\v{β}$,
where ultimately $\v{β}$ will be represented by the first (couple of) layer(s) of an EMLP, 
and $g$ by the second (couple of) layer(s).
Of particular interest is 
\begin{align}\label{eq:sumbasis}
  \v{β}(\m x)=\sum_{i=1}^n \v{η}(x_i)
\end{align}
for then $\v{β}$ and hence $f$ are obviously symmetric (permutation invariant) in $\m x$.
Anti-symmetry is more difficult and will be dealt with later.
Formally let $f∈\cF⊆ℝ^n→ℝ$ be a function (class) we wish to represent or approximate.
Let $β_b:ℝ^n→ℝ$ be basis functions for $b=1,...,m∈ℕ∪\{∞\}$,
and $\v{β}≡(β_1,...,β_m):ℝ^n→ℝ^m$ be what we call basis vector (function),
and $η_b:ℝ→ℝ$ a basis template, 
sometimes called inner function \cite{Actor:18} or polarized bass function.
Let $g∈\cG⊆ℝ^m→ℝ$ be a composition function (class),
sometimes called `outer function' \cite{Actor:18},
which creates new functions from the basis functions.
Let $\cG\circ\v{β}=\{g(\v{β}(⋅)):g∈\cG\}$ be the class of representable functions,
and $\overline{\cG\circ\v{β}}$ its topological closure, 
i.e.\ the class of all approximable functions.%
\footnote{Functions may be defined on sub-spaces of $ℝ^k$, 
function composition may not exists,
and convergence can be w.r.t.\ different topologies. 
We will ignore these technicalities unless important for our results,
but the reader may assume compact-open topology, 
which induces uniform convergence on compacta.}
$\v{β}$ is called a $\cG$-basis for $\cF$ if $\cF=\cG\circ\v{β}$ or $\cF=\overline{\cG\circ\v{β}}$,
depending on context.
Interesting classes of compositions are 
linear $\cG_{lin}:=\{g:g(\m x)=a_0+\sum_{i=1}^m x_i;~a_0,a_i∈ℝ\}$,
algebraic $\cG_{alg}:=\{\text{multivariate polynomials}\}$,
functional $\cG_{func}:=ℝ^m→ℝ$,
and $\cC^k$-functional $\cG_{func}^k:=\cC^k$ for $k$-times continuously differentiable functions.

\paradot{Examples}
For $n=1$, $\v{β}(x)=β_1(x)=x$ is an algebraic basis of all polynomials $x$,
and $\overline{\cG_{alg}\circ\v{β}}$ even includes all continuous functions $\cC^0$,
since every continuous function can be approximated arbitrarily well by polynomials.
For $n=1$, $β_b(x)=x^b$ forms a linear basis for all polynomials of degree $m<∞$
and includes all $\cC^0$ functions via closure for $m=∞$.
$β_b(x)=x^{2b-1}$ for $m=∞$ is a linear basis for all continuous axis-AS functions.
For $n=3$, $\v{β}(\m x)=β_1\m (\m x)=x^2+y^2+z^2$ is a functional basis for all rotationally-invariant functions.
For $n=2$, the two elementary symmetric polynomials $\v{β}(\m x)=(x_1+x_2,x_1 x_2)$ 
constitute an algebraic basis for all symmetric polynomials, 
which already requires a bit of work to prove.
Since $2x_1 x_2= (x_1+x_2)^2 - (x_1^2+x_2^2)$, also $\v{β}(\m x)=(x_1+x_2,x_1^2+x_2^2)$ 
is an algebraic basis for all symmetric polynomials, and its closure includes all symmetric continuous functions.
This last basis is of the desired sum form with $η_1(x)=x$ and $η_2(x)=x^2$, 
and has generalizations to $n>2$ and $d>1$ discussed later.
The examples above illustrate that larger composition classes $\cG$ allow (drastically) smaller bases ($m$)
to represent the same functions $\cF$. On the other hand, as we will see, 
algebraic bases can be harder to construct than linear bases.

\paradot{Composition by inversion}
Any injective $\v{β}$ is a functional basis for all functions:
For any $f$, $g(\v w):=f(\v{β}^{-1}(\v w))$ with 
$\v w∈\text{Image}(\v{β})$ represents $f$ as $f(\m x)=g(\v{β}(\m x))$.
If $\v{β}:ℝ→\text{Image}(\v{β})$ is a homeomorphism (diffeomorphism),
then it is a continuous (differentiable) function basis for all continuous (differentiable) functions,
and similarly for other general functions classes.

\paradot{Generally invariant linear bases}
Consider now functions $\cF_\cS$ invariant under some symmetry $\cS⊆ℝ^n→ℝ^n$,
where $\cS$ must be closed under composition,
i.e.\ $\cF_\cS=\{f∈\cF: f(\m x)=f(S(\m x))~\forall S∈\cS,\m x∈ℝ^n\}$ for some $\cF$.
If $\v{β}$ is a linear basis for $\cF$,
then for finite (compact) $\cS$,
$\v{β}_\cS(\m x):=\sum_{S\in\cS} \v{β}(S(\m x))$ 
($\v{β}_\cS(\m x):=\int_\cS \v{β}(S(\m x))dS$) is a linear basis for $\cF_\cS$,
not necessarily minimal.
Above we mentioned the class of rotations $\cS=O(3)$ and rotation-invariant functions/bases.
Symmetrized monomials are discussed below.

\paradot{Symmetric functions by sorting}
Our prime interest is the symmetry class of permutations
$\cS_n:=\{S_π:π∈S_n\}$, where $S_π(x_1,...,x_n)=(x_{π(1)},...,x_{π(n)})$.
An easy functional basis for symmetric functions is $β_b(\m x):=\m x_{[i]}$, 
where $\m x_{[i]}$ is the $i$-th smallest value among $x_1,...,x_n$ (also called order statistics),
i.e.\ 
\begin{align*}
  \min\{x_1,...,x_n\}=\m x_{[1]}≤\m x_{[2]}≤...≤\m x_{[n-1]}≤\m x_{[n]}=\max\{x_1,...,x_n\}
\end{align*}
Obviously $\v{β}(\m x)$ is symmetric (it just sorts its arguments), 
and any symmetric function $ϕ$ can
be represented as 
\begin{align*}
  ϕ(\m x) ~=~ g(\v{β}(\m x)) ~=~ ϕ(\m x_{[1]},...,\m x_{[n]}), ~~~\text{e.g.\ by choosing}~~~ g(\m x)=ϕ(\m x)
\end{align*}
Note that $\v{β}$ is continuous, hence continuous $ϕ$ have a continuous representation,
but $\v{β}$ is not differentiable whenever $x_i=x_j$ for some $i≠j$, 
so smooth $ϕ$ have only non-smooth sorting representations.
Still this is a popular representation for 1-dimensional quantum particles.
This construction still works in higher dimensions, 
but leads to discontinuous functions, as we will see. 

\paradot{Linear basis for symmetric polynomials}
The infinite class of monomials $β_{\v b}(\m x)=x_1^{b_1}⋅...⋅x_n^{b_n}$ for $\v b∈ℕ_0^n$
are a linear basis of all polynomials in $n$ variables.
Hence the symmetrized monomials $β_{\v b}^{S_n}(\m x)=\sum_{π∈S_n} x_{π(1)}^{b_1}⋅...⋅x_{π(n)}^{b_n}$
form a linear basis for all symmetric polynomials,
and by closure includes all continuous symmetric functions.%
This does \emph{not} work for algebraic bases:
While $\v{β}(\m x)=(x_1,...,x_n)$ is an algebraic basis of all polynomials and by closure of $\cC^0$,
$β_b^{S_n}(\m x)=(n-1)!(x_1+...+x_n)$ algebraically generates
only (ridge) functions of the form $g(x_1+...+x_n)$ 
which are constant in all directions orthogonal to $(1,...1)$,
so this is not a viable path for constructing algebraic bases.

\paradot{Algebraic basis for symmetric polynomials}
It is well-known that 
the elementary symmetric polynomials $e_b(\m x)$ generated by
\begin{align}\label{eq:sympol}
  \prod_{i=1}^n (1+λx_i) ~=:~ 1+λe_1(\m x)+λ^2e_2(\m x)+...+λ^n e_n(\m x)
\end{align}
are an algebraic basis of all symmetric polynomials.
Explicit expressions are $e_1(\m x)=\sum_i x_i$, and 
$e_2(\m x)=\sum_{i<j} x_i x_j$, ..., and $e_n(\m x)=x_1...x_n$,
and in general $e_b(\m x)=\sum_{i_1<...<i_b}x_{i_1}...x_{i_b}$.
For given $\m x$, the polynomial in $λ$ on the l.h.s.\ of \eqref{eq:sympol}
can be expanded to the r.h.s.\ in quadratic time or by FFT even in time $O(n\log n)$,
so the $\v e(\m x)$ can be computed in time $O(n\log n)$,
but is not of the desired form \eqref{eq:sumbasis}.
Luckily Newton already solved this problem for us.
Newton's identities express the elementary symmetric polynomials $e_1(\m x),...,e_n(\m x)$ 
as polynomials in $p_b(\m x):=\sum_{i=1}^n x_i^b$, $b=1,...,n$,
hence also $\v{β}(\m x):=(p_1(\m x),...,p_n(\m x))$ is an algebraic basis 
for all symmetric polynomials, 
hence by closure for all continuous symmetric functions,
and is of desired form \eqref{eq:sumbasis}:

\begin{theorem}[Symmetric polarized superposition \text{\cite[Thm.7]{Zaheer:18,Wagstaff:19}}]\label{thm:palgbone}
  Every continuous symmetric function $ϕ:ℝ^n→ℝ$ can be represented 
  as $ϕ(\m x)=g(\sum_i\v{η}(x_i))$ with $\v{η}(x)=(x,x^2,...,x^n)$ and continuous $g:ℝ^n→ℝ$.
\end{theorem}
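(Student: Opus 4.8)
The plan is to deduce the theorem from the two classical facts that have just been assembled: (i) the power sums $p_b(\m x)=\sum_i x_i^b$ for $b=1,\dots,n$ form an algebraic basis for the symmetric polynomials (Newton's identities, expressing the elementary symmetric polynomials $e_b$ as polynomials in $p_1,\dots,p_n$, together with the fact that the $e_b$ generate all symmetric polynomials), and (ii) every continuous symmetric function on a compact set is a uniform limit of symmetric polynomials (apply the ordinary Stone--Weierstrass/closure argument to $ϕ$ and then symmetrize via \eqref{eq:sas}, noting that symmetrizing a polynomial of degree $\le k$ yields a symmetric polynomial of degree $\le k$, and that symmetrizing preserves uniform closeness to the already-symmetric $ϕ$). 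Writing $\v{η}(x)=(x,x^2,\dots,x^n)$, we have $\v{β}(\m x):=\sum_i\v{η}(x_i)=(p_1(\m x),\dots,p_n(\m x))$, which is manifestly of the desired sum form \eqref{eq:sumbasis}.

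First I would handle the polynomial case exactly: given a symmetric polynomial $q(\m x)$, fact (i) provides a polynomial $G$ with $q(\m x)=G(p_1(\m x),\dots,p_n(\m x))=G(\v{β}(\m x))$, so $g:=G$ works with no approximation. Next, for a general continuous symmetric $ϕ$, fix a compact set $K\subseteq ℝ^n$ (recall the footnote convention that convergence is uniform on compacta). By fact (ii) choose symmetric polynomials $q_j\to ϕ$ uniformly on $K$; by the polynomial case each $q_j=G_j\circ\v{β}$ on $K$. The remaining issue is to pass to the limit: I want a single continuous $g$ on $ℝ^n$ with $ϕ=g\circ\v{β}$ everywhere (or at least on $K$). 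The clean way is to observe that $\v{β}:ℝ^n\to ℝ^n$ restricted to the closed "sorted sector" $\{x_1\le\cdots\le x_n\}$ is a proper injective continuous map, hence a homeomorphism onto its image $\mathcal{V}:=\v{β}(ℝ^n)$ (this is the ``composition by inversion'' remark specialized to the permutation-symmetric setting, using that power sums separate multisets of $n$ reals). Then $g_0:=ϕ\circ(\v{β}|_{\text{sorted}})^{-1}:\mathcal{V}\to ℝ$ is continuous and satisfies $ϕ=g_0\circ\v{β}$ identically, and the $q_j=G_j\circ\v{β}$ show $G_j\to g_0$ uniformly on $\v{β}(K)$, so $g_0$ is continuous there; finally extend $g_0$ from the closed set $\mathcal{V}$ (or from $\v{β}(K)$) to a continuous $g$ on all of $ℝ^n$ by the Tietze extension theorem.

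I expect the main obstacle to be the injectivity/properness of $\v{β}$ on the sorted sector, i.e.\ showing that the map $\m x\mapsto(p_1,\dots,p_n)$ separates multisets of reals of size $n$ and is a closed (proper) map, so that its image is closed and the inverse is continuous; this is what licenses the Tietze extension and guarantees a genuine continuous $g$ rather than merely a pointwise limit of the $G_j$. This follows because the $p_b$ determine the $e_b$ via Newton's identities, and the $e_b$ are (up to sign) the coefficients of $\prod_i(λ-x_i)$, whose multiset of roots is exactly $\{x_1,\dots,x_n\}$; properness is then the elementary fact that $\|\m x\|\to\infty$ forces $|p_2(\m x)|\to\infty$. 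Everything else --- the symmetrization estimate, the polynomial substitution, and the extension --- is routine once this structural point is in place. (If one only wants approximation of $ϕ$ rather than exact representation, one can skip the homeomorphism argument and instead take $g$ to be any continuous extension of a single $G_j$ with $j$ large; but the homeomorphism route gives the stronger exact statement and makes the continuity of $g$ transparent.)
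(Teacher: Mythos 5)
Your proof is correct. It is worth noting how it sits relative to the paper: the paper does not actually write out a proof of Theorem~\ref{thm:palgbone}; it sketches the Newton's-identities route ($p_1,\dots,p_n$ form an algebraic basis for the symmetric polynomials, ``hence by closure'' for continuous symmetric functions) and otherwise defers to the two proofs in \cite{Zaheer:18} --- one by composition-by-inversion, one by Newton's identities. Your argument is essentially a merge of those two routes, and its valuable part is precisely the step the paper's sketch glosses over: the algebraic-basis-plus-closure reasoning by itself only yields \emph{approximation} of $ϕ$ by functions $G_j(\v{β}(\m x))$ with polynomial $G_j$, not the exact representation $ϕ=g\circ\v{β}$ with a single continuous $g$ that the statement claims. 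Your homeomorphism argument supplies that upgrade correctly: injectivity of $\v{β}=(p_1,\dots,p_n)$ on the sorted sector (power sums determine the elementary symmetric polynomials by Newton's identities, hence the multiset of roots of $\prod_i(λ-x_i)$), properness via $p_2(\m x)=\|\m x\|^2$, so $\v{β}$ restricted to the sorted sector is a closed embedding with closed image, $g_0:=ϕ\circ(\v{β}|_{\text{sorted}})^{-1}$ is continuous and satisfies $ϕ=g_0\circ\v{β}$ everywhere by symmetry, and Tietze extends $g_0$ to a continuous $g$ on $ℝ^n$. The only criticism is redundancy: once this construction is in place, the Stone--Weierstrass/symmetrization preamble and the sequence $q_j=G_j\circ\v{β}$ contribute nothing to the exact statement and can be deleted, as you half-acknowledge in your closing parenthesis; conversely, if you wanted only the ``closure'' version of the result, the polynomial part alone would suffice.
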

\cite{Zaheer:18} provide two proofs, one based on `composition by inversion',
the other using symmetric polynomials and Newton's identities.
The non-trivial generalization to $d>1$ is provided in Section~\ref{sec:threed}.

Theorem~\ref{thm:palgbone} is a symmetric version of the infamous Kolmogorov-Arnold superposition theorem \cite{Kolmogorov:57},
which solved Hilbert's 13th problem. Its deep and obscure\footnote{involving continuous $η$ 
with derivative 0 almost everywhere, and not differentiable on a
dense set of points.} constructions continue to fill whole PhD theses \cite{Liu:15,Actor:18}.
It is quite remarkable that the symmetric version above is very natural and comparably easy to prove.

For given $\m x$, the basis $\v{β}(\m x)$ can be computed in time $O(n^2)$, 
so is actually slower than to compute than $\v e(\m x)$.
The elementary symmetric polynomials also have other advantages 
(integral coefficients for integral polynomials, 
works for fields other than $ℝ$, 
is numerically more stable, mimics 1,2,3,... particle interactions),
so symmetric NN based on $e_b$ rather than $p_b$ may be worth pursuing. 
Note that we need at least $m≥n$ functional bases for a \emph{continuous} representation,
so Theorem~\ref{thm:palgbone} is optimal in this sense \cite{Wagstaff:19}.

Table~\ref{tab:bases} summarizes the bases and properties discussed in this section and beyond.
\begin{table}
\caption[Bases and properties]%
{\textbf{\boldmath Bases and properties for $d=1$. Last column comments on $d>1$.} 
Many different representations for symmetric functions have been suggested.
The representations are very heterogenous, so this table is our best but limited attempt to unify and press them into one table.
The table is for $d=1$, but when (some aspects of) the method generalizes to $d>1$ 
we provide \#Bases for $d≥1$ and a comment or reference in the last column.
Most representations can be viewed as instantiations of $ϕ(\m x) ~=~ g(\v{β}(\m x))$ 
with outer function $g$ composing inner base functions $\v{β}$ possibly polarized as $\v{β}=\sum_{i=1}^n\v{η}$.
See main text and references for details, and glossary below. 
The table is roughly in order as described in Section~\ref{sec:oneds}.
For the last three rows, see references. The meaning of the columns is described in the main text.
}\label{tab:bases}\vspace{2ex}
\setlength\tabcolsep{0pt}
\begin{tabular}{|c|c|c|c|c|c|c|c|c|c|}\hline
  Base               & $O()$ comp.        & \#Bases   & Basis      &                 &                            & Prop.       & $\v{β}=?$          & Refs               &        \\
  $\v{β}|\v{η}|$else & time $\v{β}(\m x)$ & $m$       & type$\cG$  & $\cG\circ\v{β}$ & $\overline{\cG\circ\v{β}}$ & of $\v{β}$  & $\sum\v{η}$?       &                    & $d>1$  \\ \hline
  $Σ_π$              & $n!$               & $\aleph_2$& Id         & SymFct          &  --                        & --          & --                 & Sec.\ref{sec:oneds}& same   \\
  Inversion          & depends            & $n$       & $\cC^0$    & $\cC^0$         & =                          & $\cC^0$     & \checkmark         & \cite{Zaheer:18}   & \begin{tabular}{c}un-\\[-1ex]known\end{tabular} \\ 
  $β_b(\m x)=\m x_{[b]}$ & $n\log n$      & $n$       & $\cC^0$    & Sym$\cC^0$      & Sym$\cC^0$                 & $\cC^0$     & no                 & Sec.\ref{sec:oneds}& $\v{β}\!\not\in\cC^0$ \\ 
  \begin{tabular}{c}SymMonom.\\[-1ex]$≤$ deg $D$\end{tabular} & $D^n$ & $({D+n\atop n})$ & Lin & SymPoly$^D$ & =             & $\cC^∞$     & no                 &                    & same \\
  \begin{tabular}{c}AllSym\\[-1ex]Monomials\end{tabular} & $\aleph_0$ & $\aleph_0$ & Lin & SymPoly & Sym$\cC^0$   & $\cC^∞$     & no                 &                    & same \\
  $e_b(\m x)$        & $n\log n$ & $({n+d\atop d})-1$ & Alg        & SymPoly         & Sym$\cC^0$Fct              & $\cC^∞$     & no                 & Sec.\ref{sec:oneds}& Sec.\ref{sec:threed} \\ 
  $η_b(x)=x^b$       & $n^2$     & $({n+d\atop d})-1$ & Alg        & SymPoly         & Sym$\cC^0$Fct              & $\cC^∞$     & \checkmark         & \cite{Zaheer:18}   & Sec.\ref{sec:threed} \\ 
  $η_b=$Cantor       & $n\log n$          & $1$       & All        & AllSymFct       & \multicolumn{2}{c|}{~--~$|$total.discont.} & \checkmark     & \cite{Zaheer:18}   & same \\
  $ε$-Grid           & $(\frs1{ε})^{dn}$  & $(\frs1{ε})^{dn}$ & Lin & SymPCG         & \multicolumn{2}{c|}{~=~$|∞$-indicator} & \checkmark         & \cite{Lu:20}       & same \\                        
  \begin{tabular}{c}Smoothed\\[-1ex]$ε$-Grid\end{tabular} & $(\frs1{ε})^{dn}$  & $(\frs1{ε})^{dn}$ & Lin & \begin{tabular}{c}Smoothed\\[-1ex]SymPC\end{tabular} & \multicolumn{2}{c|}{~$\cC^0$~$|∞$-indicator} & \checkmark    & \cite{Lu:20}       & same \\                        
\hline
\end{tabular}
\\
\begin{tabbing}
  \hspace{0.16\textwidth} \= \hspace{0.70\textwidth} \= \kill
  {\it Symbol }      \> {\it Explanation}                                                    \\[0.5ex]
  $d,n∈ℕ$            \> dimensionality,number of particles                                   \\[0.5ex]
  $\m x∈ℝ^n$         \> $\m x=(x_1,...,x_n)$, function argument, NN input, $n$ 1d particles ($d=1$) \\[0.5ex]
  $b∈\{1:m\}$        \> index of basis function                                              \\[0.5ex]
  $\v{β}:ℝ^n→ℝ^m$    \> $m$ symmetric basis functions                                        \\[0.5ex]
  $\v{η}:ℝ→ℝ^m$      \> $m$ polarized basis functions                                        \\[0.5ex]
  $g:ℝ^m→ℝ$          \> composition or outer function, used as $g(\v{β}(\m x))$              \\[0.5ex]
  Sym(PCG)           \> symmetric (piecewise constant grid)                                  \\[0.5ex]
  Poly$^{(D) }$      \> Multivariate polynomial (of degree at most $D$)                      \\[0.5ex]
  $\cG_{type}\ni g$  \> Composition class (Id-entity, Lin-ear, Alg-ebraic, $\cC^0=$cont., All fcts)\\[0.5ex]
\end{tabbing}
\end{table}

\section{One-Dimensional AntiSymmetry}\label{sec:oneda}

We now consider the anti-symmetric (AS) case for $d=1$.
We provide representations of AS functions
in terms of generalized Slater determinants (GSD) of \emph{partially} symmetric functions.
In later sections we will discuss how these partially symmetric functions
arise from equivariant functions and how to represent equivariant functions by EMLP.
The reason for deferral is that EMLP are inherently tied to $d>1$.
Technically we show that the GSD can be reduced to a Vandermonde determinant,
and exhibit a potential loss of differentiability due to the Vandermonde determinant.

\paradot{Analytic Anti-Symmetry}
Let $φ_i:ℝ→ℝ$ be single-particle wave functions.
Consider the matrix 
\begin{align*}
  Φ(\m x) ~=~
  \left(\begin{matrix}
    φ_1(x_1) & \cdots & φ_n(x_1) \\
     \vdots  & \ddots & \vdots \\
    φ_1(x_n) & \cdots & φ_n(x_n)
  \end{matrix}\right)
\end{align*}
where $\m x≡(x_1,...,x_n)$.
The (Slater) determinant $\det Φ(\m x)$ is anti-symmetric,
but can represent only a small class of AS functions,
essentially the AS analogue of product (wave) functions (pure states, Hartree-Fock approximation).
Every continuous AS function can be approximated/represented by a finite/infinite linear combination of such determinants:
\begin{align*}
  ψ(x_1,...,x_n) ~=~ \sum_{k=1}^∞ \det Φ^{(k)}(\m x), ~~\text{where}~~ Φ^{(k)}_{ij}(\m x):=φ_i^{(k)}(x_j)
\end{align*}
An alternative is to generalize the Slater determinant itself \cite{Pfau:19}
by allowing the functions $φ_i(x)$ to depend on all variables
\begin{align*}
  Φ(\m x) ~=~
  \left(\begin{matrix}
      φ_1(x_1|x_{≠1}) & \cdots & φ_n(x_1|x_{≠1}) \\
          \vdots      & \ddots &    \vdots       \\
      φ_1(x_n|x_{≠n}) & \cdots & φ_n(x_n|x_{≠n})
  \end{matrix}\right)
\end{align*}
where $x_{≠i}≡(x_1,...,x_{i-1},x_{i+1},...,x_n)$.     
If $φ_i(x_j|x_{≠j})$ is symmetric in $x_{≠j}$, which we henceforth assume%
\footnote{The bar $|$ is used to visually indicate this symmetry, otherwise there is no difference to using a comma.},
then exchanging $x_i\leftrightarrow x_j$ is (still) equivalent to exchanging rows $i$ and $j$ in $Φ(\m x)$,
hence $\det Φ$ is still AS. 
The question arises how many GSD are needed to be able to represent \emph{every} AS function $ψ$.
The answer turns out to be `just one', but with non-obvious smoothness relations:
Any AS $ψ$ can be represented by some $Φ$, 
any analytic $ψ$ can be represented by an analytic $Φ$.
The case of continuous(ly differentiable) $ψ$ is more complicated.

\begin{theorem}[Representation of all (analytic) AS $\boldsymbol{ψ}$]\label{thm:as}
   For every (analytic) AS function $ψ(\m x)$ there exist (analytic) 
   $φ_i(x_j|x_{≠j})$ symmetric in $x_{≠j}$ such that $ψ(\m x)=\det Φ(\m x)$.
\end{theorem}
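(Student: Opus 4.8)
The plan is to reduce the generalized Slater determinant to a Vandermonde determinant --- as the excerpt already anticipates --- by absorbing one overall symmetric factor into a single column. Write $V(\m x):=\prod_{i<j}(x_j-x_i)=\det(x_i^{j-1})_{i,j=1}^n$ for the Vandermonde determinant, which is itself AS. First I would note that since $ψ$ is AS it vanishes on every hyperplane $\{x_i=x_j\}$, so $s:=ψ/V$ is well defined and \emph{symmetric} on the dense open set where all coordinates are distinct (a quotient of two AS functions is permutation-invariant, since both change sign the same way). For the bare existence statement this already suffices: extend $s$ by $0$ onto the diagonal set $Δ:=\{\m x:x_i=x_j\text{ for some }i≠j\}$; the extended $s$ is still permutation-symmetric, because $Δ$ is permutation-invariant, and $ψ=s⋅V$ holds on all of $ℝ^n$, since both sides vanish on $Δ$.

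Given such an $s$, I would then exhibit $Φ$ explicitly: set $φ_j(x_i|x_{≠i}):=x_i^{j-1}$ for $j=1,\dots,n-1$ and $φ_n(x_i|x_{≠i}):=s(\m x)\,x_i^{n-1}$. Each $φ_j$ with $j<n$ does not depend on the barred variables at all, and $φ_n$ depends on them only through $s$, which is \emph{fully} symmetric and hence in particular symmetric in $x_{≠i}$; so the required symmetry of every $φ_j$ holds. Moreover $s(\m x)$ is literally the same scalar in every row of $Φ$ --- it does not depend on the row index $i$ --- so it factors out of the last column, giving $\det Φ(\m x)=s(\m x)⋅\det(x_i^{j-1})_{i,j}=s(\m x)V(\m x)=ψ(\m x)$, as desired. (For $\m x∈Δ$ two rows of $Φ$ coincide, so $\det Φ$ and $ψ$ both vanish there, consistently.)

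It remains to upgrade this to the analytic case: if $ψ$ is real-analytic, then $s=ψ/V$ must extend to a real-analytic --- and then automatically symmetric --- function on $ℝ^n$, which makes $φ_n$ analytic and finishes the proof. I would prove this by peeling off the linear factors of $V$ one at a time: if an analytic $f$ vanishes identically on $\{x_i=x_j\}$, then expanding $f$ in the single variable $x_i$ with the remaining coordinates as analytic parameters gives $f=(x_i-x_j)\tilde f$ with $\tilde f$ analytic, and $\tilde f$ still vanishes on each remaining hyperplane $\{x_k=x_l\}$ by continuity. Iterating over all $\binom n2$ pairs divides out exactly $V$ and leaves an analytic quotient, which agrees with $s$ on the dense set $\{V≠0\}$ and is symmetric there, hence everywhere. (Equivalently one may invoke that the ring of convergent power series is a unique factorization domain in which the pairwise non-associate primes $x_j-x_i$ each divide $ψ$.) I expect this removable-singularity step to be the only real obstacle --- one must check that the successive divisions stay analytic and that the local quotient germs glue to a single global analytic function --- whereas the determinant manipulation is routine.
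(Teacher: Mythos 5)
Your construction is correct and its determinant part is exactly the paper's: the fully symmetric quotient $s=ψ/\Delta$ (with $\Delta$ the Vandermonde product) is absorbed into a single column of the Vandermonde matrix, the other orbitals being the monomials $x^{j-1}$; the paper puts $χ=ψ/\Delta$ alone in the first column rather than multiplying it onto the last one, which is immaterial, and your handling of the diagonal set (extend by $0$, note both sides vanish there) matches the paper's. Where you genuinely diverge is the analytic-extension step. The paper's Lemma~\ref{lem:analytic} anti-symmetrizes the Taylor expansion of $ψ$ and invokes the classical fact (Weyl) that every anti-symmetric polynomial equals $\Delta$ times a symmetric polynomial, so the quotient appears directly as a symmetric analytic series. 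You instead peel off the $\binom{n}{2}$ linear factors $(x_i-x_j)$ one at a time via an analytic Hadamard-lemma/UFD argument; this is essentially the technique the paper itself uses for the finitely differentiable case (Theorem~\ref{thm:asdiff} via Lemma~\ref{lem:fxg}), transplanted to the analytic category, where — unlike in $\cC^k$ — no smoothness is lost per division. Both routes are valid; yours is local from the outset and makes explicit the localize-and-glue step that the paper's global Taylor-series phrasing leaves implicit (a real-analytic function on $ℝ^n$ is only locally a convergent series), while the paper's argument is shorter and exhibits the symmetry of the quotient term by term.
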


\begin{proof}
Let $φ_1(x_j|x_{≠j}):=χ(x_{1:n})$ be totally symmetric in all $(x_1,...,x_n)$ to be determined later.
Let $φ_i(x_j|x_{≠j}):=x_j^{i-1}$ for $1<i≤d$. Then
\begin{align*}
  \det Φ(\m x) =
  \left|\begin{matrix}
    χ(x_{1:n}) &   x_1  & \cdots & x_1^{n-1} \\
       \vdots  & \vdots & \ddots & \vdots    \\
    χ(x_{1:n}) &   x_n  & \cdots & x_n^{n-1}
  \end{matrix}\right| = χ(x_{1:n})⋅
  \left|\begin{matrix}
       1    &   x_1  & \cdots & x_1^{n-1} \\
    \vdots  & \vdots & \ddots & \vdots    \\
       1    &   x_n  & \cdots & x_n^{n-1}
  \end{matrix}\right| 
  = χ(x_{1:n})\nq\prod_{1≤j<i≤n}\nq(x_i-x_j)
\end{align*}
where the (second) last expression is (the expression for) the Vandermonde determinant.
Since $ψ$ is AS, $ψ(\m x)=0$ if $x_i=x_j$ for any $i≠j$,
hence $ψ$ has factors $x_i-x_j$ for all $i≠j$. Therefore
$χ(x_{1:n}):=ψ(x_{1:n})/\prod_{j<i}(x_i-x_j)$ is totally symmetric,
since $\prod_{j<i}(x_i-x_j)$ is AS,
and obviously $\det Φ(\m x)=ψ(\m x)$ for this choice.
For general AS $ψ$ and $x_i=x_j$ we can define $χ(\m x)$ arbitrarily,
as long as it is symmetric, e.g.\ $χ(\m x)=0$ will do.
For analytic $ψ$, the next lemma shows that $χ$ has an analytic extension.
\qed\end{proof}

\begin{lemma}[\boldmath Symmetric $ψ/∆$ is analytic if AS $ψ$ is analytic]\label{lem:analytic}
Let $ψ:ℝ^n→ℝ$ be AS and analytic. Then $χ(\m x):=ψ(\m x)/∆(\m x)$,
where $∆(\m x):=\prod_{1≤j<i≤n}(x_i-x_j)$,
is totally symmetric and analytic on $ℝ^n\setminus\{\m x:∆(\m x)=0\}$ and 
has a symmetric analytic continuation to all $ℝ^n$.
\end{lemma}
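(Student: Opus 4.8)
The plan is to show that $\chi = \psi/\Delta$ extends analytically across each hyperplane $\{x_i = x_j\}$, and then to argue that these local analytic continuations agree and are symmetric. The key observation is that $\Delta$ factors as a product of the linear forms $x_i - x_j$, each of which is an analytic function vanishing to first order on its zero set. So the real content is a one-variable division fact: if $F(t)$ is analytic near $t=0$ and $F(0)=0$, then $F(t)/t$ has a removable singularity and extends analytically. I would first reduce to this local statement, then assemble the pieces.

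The first step is to work near a generic point of the hyperplane $H_{ij} = \{x_i = x_j\}$, i.e.\ a point where $x_i = x_j$ but all other coordinates are distinct from each other and from this common value. Near such a point only the single factor $x_i - x_j$ of $\Delta$ vanishes, and $\Delta(\m x)/(x_i - x_j)$ is analytic and nonzero there. Since $\psi$ is AS, transposing the $i$-th and $j$-th arguments sends $\psi$ to $-\psi$; fixing all coordinates except $x_i$ and changing variables to $s = x_i - x_j$ and $u = x_i + x_j$, the function $\psi$ becomes, for fixed $u$ and fixed remaining coordinates, an odd analytic function of $s$, hence $\psi = s\cdot\tilde\psi$ with $\tilde\psi$ analytic and even in $s$; dividing by the nonvanishing analytic factor $\Delta/s$ shows $\chi$ extends analytically across the generic part of $H_{ij}$. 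Iterating this (or arguing by dimension: the locus where two or more of the defining linear forms of $\Delta$ vanish simultaneously has codimension $\ge 2$) and invoking the standard fact that a function analytic on the complement of a codimension-$\ge 2$ analytic set which is \emph{locally bounded} (here locally bounded because it already extends across the smooth codimension-$1$ part) extends analytically across the whole of $\{\Delta = 0\}$, we obtain an analytic function $\hat\chi$ on all of $\mathbb{R}^n$ agreeing with $\psi/\Delta$ off $\{\Delta = 0\}$.

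The remaining step is symmetry. On the dense open set $\{\Delta \ne 0\}$ we have $\hat\chi = \psi/\Delta$, and since $\psi$ and $\Delta$ are both AS, their ratio is symmetric there; two analytic functions on $\mathbb{R}^n$ (here $\hat\chi$ and $\hat\chi\circ S_\pi$ for a transposition $\pi$) that agree on a set with nonempty interior agree everywhere by the identity theorem, so $\hat\chi$ is symmetric on all of $\mathbb{R}^n$. This $\hat\chi$ is the desired symmetric analytic continuation of $\chi$.

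The main obstacle is the extension across the singular locus of $\{\Delta = 0\}$ — the points where two or more of the linear forms $x_i - x_j$ vanish together (e.g.\ $x_1 = x_2 = x_3$). One cannot simply divide out factors one at a time there, because the individual one-variable reductions above are set up transverse to a \emph{single} hyperplane. The clean way around this, which I would use, is to first establish analyticity on the complement of the singular locus (codimension $\ge 2$) by the generic-point argument, note that $\chi$ is then locally bounded near the singular locus, and then appeal to the removable-singularity theorem for analytic functions across analytic sets of codimension $\ge 2$ (a form of Riemann's / Hartogs' extension theorem, valid in the real-analytic category via complexification since $\psi$ and $\Delta$ extend holomorphically to a neighborhood in $\mathbb{C}^n$). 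An alternative, more elementary route is an explicit induction on $n$: write $\Delta(\m x) = \Delta(x_{\ne n})\cdot\prod_{j<n}(x_n - x_j)$, divide $\psi$ first by $\prod_{j<n}(x_n - x_j)$ using the one-variable fact in the variable $x_n$ (valid since $\psi$ vanishes on each $\{x_j = x_n\}$), check the quotient is AS in $x_{\ne n}$, and recurse; this avoids any appeal to Hartogs but requires a little care that the intermediate quotients remain analytic and suitably antisymmetric.
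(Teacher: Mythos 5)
Your proof is correct, but it follows a genuinely different route from the paper's. The paper argues globally and algebraically: it expands $\psi$ in a multivariate power series, observes that antisymmetrizing the series termwise leaves $\psi$ unchanged, and invokes the classical fact (Weyl) that every antisymmetrized monomial $AS[\m x^{\v k}]$ equals $\Delta(\m x)$ times a symmetric polynomial $S_{\v k}$, so that $\chi=\sum_{\v k}a_{\v k}S_{\v k}$ is manifestly symmetric and analytic. Your argument is instead local-analytic: divide out the single vanishing factor $x_i-x_j$ near generic points of each hyperplane using oddness of $\psi$ in $s=x_i-x_j$, remove the remaining codimension-$\ge 2$ singular locus by complexifying and applying the second Riemann extension theorem, and obtain symmetry from the identity theorem on the dense open set $\{\Delta\ne 0\}$. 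Each approach buys something: the paper's proof is elementary and exhibits the quotient explicitly as a series of symmetric polynomials, though it quietly assumes the Taylor series represents $\psi$ globally and that the termwise-divided series converges (strictly one should localize and patch, which is essentially what your argument does); your proof avoids any global series, is robust to $\psi$ being analytic only on a subdomain, and runs closely parallel to the paper's $\cC^k$ argument for Theorem~\ref{thm:asdiff}, at the price of invoking complexification and removable-singularity machinery. One small correction: for analytic sets of (complex) codimension $\ge 2$ the second Riemann extension theorem requires no local boundedness hypothesis, which is fortunate, because boundedness near the deeper strata does not follow merely from having extended across the smooth codimension-$1$ part; your clause ``locally bounded because it already extends across the smooth codimension-$1$ part'' should be dropped in favour of citing the codimension-$\ge 2$ version directly.
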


\begin{proof}
Since $ψ$ is analytic, it has a multivariate Taylor series expansion.
For $\v k=(k_1,...,k_n)∈ℕ_0^n$ and $\m x^{\v k}:=x_1^{k_1}⋅⋅⋅x_n^{k_n}$,
we have $ψ(\m x)=\sum_{\v k}a_{\v k}\m x^{\v k}$ for some $a_{\v k}∈ℝ$.
If we anti-symmetrize both sides by $AS[\m x^{\v k}]:=\sum_{π∈S_n}σ(π)x_{π(1)}...x_{π(n)}$,
we get $ψ(\m x)=\sum_{\v k}a_{\v k}AS[\m x^{\v k}]$. Every AS polynomial $AS[\m x^{\v k}]$
can be represented as $AS[\m x^{\v k}]=∆(\m x)S_{\v k}(\m x)$ for some symmetric polynomials $S_{\v k}$.
This follows by successively dividing out all factors $(x_j-x_i)$ \cite[Sec.II.2]{Weyl:46}.
Hence $ψ(\m x)=∆(\m x)χ(\m x)$ with $χ(\m x):=\sum_{\v k}a_{\v k}S_{\v k}(\m x)$,
which is obviously symmetric and analytic.
\qed\end{proof}

\paradot{Continuous/differentiable AS}
We can weaken the analyticity condition as follows:

\begin{theorem}[Representation of continuous/differentiable $\boldsymbol{ψ}$]\label{thm:asdiff}
   Let $\cC^k(ℝ^n)$ be the $k$-times continuously differentiable functions ($k∈ℕ_0$).%
\footnote{For $\v k=(k_1,...,k_n)$, $\cC^{\v k}$ means $∂_{x_1^{k_1}}⋅⋅⋅∂_{x_n^{k_n}}$ exists and is continuous.
$\cC^{|\v k|}:=⋂_{|\v k|=k}\cC^{\v k}$, where $|\v k|:=k_1+...+k_n$. 
We actually only need $ψ∈⋂_{|\v k|=k}\cC^{\v k+(0,1,...,n-1)}\supsetneq \cC^{k+n(n-1)/2}$.}
   For AS function $ψ∈\cC^{k+n(n+1)/2}(ℝ^n)$ there exist 
   $φ_i(x_j|x_{≠j})∈\cC^k(ℝ^n)$ symmetric in $x_{≠j}$ such that $ψ(\m x)=\det Φ(\m x)$.
\end{theorem}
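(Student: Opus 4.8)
The plan is to mimic the analytic case (Theorem~\ref{thm:as} and Lemma~\ref{lem:analytic}), constructing $\varphi_1(x_j|x_{\neq j}):=\chi(x_{1:n})$ with $\chi:=\psi/\Delta$ and $\varphi_i(x_j|x_{\neq j}):=x_j^{i-1}$ for $1<i\le n$, so that $\det\Phi=\chi\cdot\Delta=\psi$ exactly as before. The whole content is therefore to show that if $\psi$ is AS and sufficiently smooth, then $\chi=\psi/\Delta$ extends to a $\mathcal C^k$ function on all of $\mathbb R^n$ (it is automatically symmetric off the vanishing set of $\Delta$, hence symmetric everywhere by continuity). Since the $\varphi_i$ for $i>1$ are polynomials, they are $\mathcal C^\infty$, so the claimed regularity of $\Phi$ reduces entirely to that of $\chi$.

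For the smoothness of the quotient I would argue factor by factor, peeling off one difference $(x_i-x_j)$ at a time, which is why the hypothesis carries the triangular-number loss $n(n-1)/2=\binom n2$ (the footnote's sharper statement $\bigcap_{|\v k|=k}\mathcal C^{\v k+(0,1,\dots,n-1)}$ makes this bookkeeping precise). The key one-variable lemma: if $h(t)$ is $\mathcal C^{k+1}$ and $h(a)=0$, then $h(t)/(t-a)$ extends to a $\mathcal C^k$ function of $t$, with the value $h'(a)$ at $t=a$; this is just Hadamard's lemma / the integral form $h(t)/(t-a)=\int_0^1 h'(a+s(t-a))\,ds$, which also shows smooth dependence on any parameters $a$ and on other variables appearing in $h$. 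Iterating this over the $\binom n2$ factors $x_i-x_j$ (using at each step that $\psi$, being AS, vanishes on the diagonal $x_i=x_j$, and likewise for the partial quotients obtained so far) produces $\chi\in\mathcal C^k$. A cleaner packaging is to anti-symmetrize the degree-$(k+\binom n2)$ Taylor polynomial of $\psi$ about any point, write each anti-symmetrized monomial as $\Delta$ times a polynomial as in Lemma~\ref{lem:analytic} (successive division by the $(x_j-x_i)$, \cite[Sec.II.2]{Weyl:46}), and control the $\mathcal C^k$-size of the remainder by Taylor's theorem with remainder — but the direct Hadamard-lemma iteration is more transparent and self-contained.

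The main obstacle is the order of division: dividing $\psi$ by $\Delta=\prod_{j<i}(x_i-x_j)$ is not literally a sequence of independent one-variable divisions, because after removing $(x_2-x_1)$ the resulting function need not visibly vanish on $x_3=x_1$ as a \emph{function} — it does, but one must verify that the partial quotient still vanishes on the remaining diagonals, which follows because the full product $\Delta$ divides $\psi$ in the ring of $\mathcal C^{k+\binom n2}$ functions only if each step preserves the vanishing on the not-yet-divided hyperplanes. The careful way is to fix a convenient elimination order for the $\binom n2$ pairs (e.g.\ lexicographic) and check inductively that the quotient after removing a partial product of difference-factors still vanishes on each remaining hyperplane $x_i=x_j$; this is where the derivative loss is spent, one derivative per factor, giving the stated $\mathcal C^{k+n(n-1)/2}\Rightarrow \mathcal C^k$. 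Finally, symmetry of $\chi$ on the open dense set $\{\Delta\neq0\}$ (because $\psi$ is AS and $\Delta$ is AS with the same sign change) plus continuity of the $\mathcal C^k$ extension forces symmetry on all of $\mathbb R^n$, completing the verification that $\varphi_1=\chi$ is admissible.
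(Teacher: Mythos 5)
Your proposal is correct and follows essentially the same route as the paper: reduce the generalized Slater determinant to $\chi\cdot\Delta$ with $\varphi_1=\chi=\psi/\Delta$ and $\varphi_i=x_j^{i-1}$, then recursively divide out the $\binom{n}{2}$ factors $(x_j-x_i)$, losing one derivative per factor and checking at each step that the partial quotient still vanishes on the remaining diagonals. The only cosmetic difference is that you justify the single-division step via the Hadamard integral formula $h(t)/(t-a)=\int_0^1 h'(a+s(t-a))\,ds$, whereas the paper isolates it as Lemma~\ref{lem:fxg} proved via Taylor remainders; both yield the same $\mathcal C^{k+1}\Rightarrow\mathcal C^{k}$ loss.
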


This is a much weaker result than for \emph{linear} anti-symmetrization \eqref{eq:sas}, 
where all $ψ∈\cC^k$ could be represented by $χ∈\cC^k$,
in particular continuous $ψ$ had continuous representations.
For instance, Theorem~\ref{thm:asdiff} (only) implies that $\fr12 n(n+1)$-times differentiable $ψ$ 
have continuous representations, but leaves open whether less than $\fr12 n(n+1)$-times differentiable $ψ$ 
\emph{may} only have discontinuous representations.
It turns out that this is not the case. In Section~\ref{sec:threed} we show
that continuous $ψ$ can be represented by continuous $Φ$,
but whether $ψ∈\cC^k$ has representations with $Φ∈\cC^k$ is open for $k>0$.

\begin{proof}
Consider functions $ψ_A:ℝ^n→ℝ$ with $ψ_A(\m x)=0$ if $x_i=x_j$ for some $(i,j)∈A⊆\{(i,j):1≤i<j≤n\}=:P$.
Note that $ψ_P:=ψ$ satisfies this condition, but the constructed $ψ_A$ will \emph{not} be AS for $A≠P$.
The proof recursively divides out factors $x_j-x_i$: For $A=A'\dot{∪}\{(i,j)\}$ define
\begin{align*}
  ψ_{A'}(\m x) & ~:= \left\{ { {ψ_A(\m x)\over x_j-x_i} ~~~~~~~~~~\text{if}~~~ x_j≠x_i 
                         \atop {∂ψ_A(\m x)\over ∂x_j}|_{x_j=x_i} ~~\text{if}~~~ x_j=x_i } \right.
\end{align*}
If $ψ_A∈\cC^k$ then $ψ_{A'}∈\cC^{k-1}$ by the Lemma~\ref{lem:fxg} below. 
The recursive elimination is independent of the order in which we choose pairs from $A$.
Though not needed, note also that the definition is symmetric in $i\leftrightarrow j$,
since $∂_j ψ|_{x_j=x_i}=-∂_i ψ|_{x_i=x_j}$ due to $ψ(x,x)≡0$ implying $0=dψ(x,x)/dx=∂_1 ψ+∂_2 ψ$.
For $x_j≠x_i$ we obviously have $ψ_{A'}(\m x)=0$ if $x_\jmath=x_\imath$ for some $(\imath,\jmath)∈A'$.
$ψ_{A'}(\m x)=0$ also holds for $x_j=x_i$ by a continuity argument or direct calculation.
We hence can recursively divide out $x_j-x_i$ (in any order)
\begin{align*}
  χ(\m x) ~:=~ ψ_{\{\}}(\m x) ~=...=~ {ψ_A(\m x)\over\prod_{(i,j)∈A}(x_j-x_i)} ~=...=~ {ψ_P(\m x)\over\prod_{(i,j)∈P}(x_j-x_i)} ~=~ {ψ(\m x)\over ∆(\m x)}
\end{align*}
and $χ∈\cC^k$ if $ψ∈\cC^{k+|P|}$. Since $ψ$ and $∆$ are AS, $χ$ is symmetric.
This shows that $φ_1:=χ∈\cC^k$ and the other $φ_i=x_j^{i-1}$ are even analytic.\qed
\end{proof}

Unfortunately this construction does not generalize to $d>1$ dimensions,
but a different construction in Section~\ref{sec:threed} will give a (somewhat) weaker result.
The proof above used the following lemma: 
\begin{lemma}[\boldmath $f∈\cC^k$ implies $f/x∈\cC^{k-1}$]\label{lem:fxg}
  Let $f:ℝ→ℝ$ be $k≥1$-times differentiable and $f(0)=0$, 
  then $g(x):=f(x)/x$ for $x≠0$ and $g(0):=f'(0)$ is 
  $k-1$-times continuously differentiable, i.e.\ $g∈\cC^{k-1}(ℝ)$.
\end{lemma}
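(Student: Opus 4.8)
The plan is to prove Lemma~\ref{lem:fxg} by working directly with the Taylor expansion of $f$ at the origin, which is where all the delicacy lives; away from $0$ the claim is just the quotient rule applied $k-1$ times. First I would write, using the hypothesis $f(0)=0$ and $f∈\cC^k$, the integral/Taylor form $f(x)=\sum_{j=1}^{k-1}\fr{f^{(j)}(0)}{j!}x^j+R_k(x)$ where $R_k$ is the $k$-th order remainder, or more cleanly invoke the Hadamard-type factorization $f(x)=x\,g(x)$ with $g(x):=\int_0^1 f'(tx)\,dt$. The key observation is that this integral representation of $g$ is valid for \emph{all} $x$ (including $x=0$, where it gives $g(0)=f'(0)$, matching the definition in the statement), and it manifestly agrees with $f(x)/x$ for $x≠0$.

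The main work is then to show $g∈\cC^{k-1}$. Here I would differentiate under the integral sign: since $f∈\cC^k$, the map $t\mapsto f'(tx)$ is $\cC^{k-1}$ in $x$ uniformly for $t∈[0,1]$, so $g^{(m)}(x)=\int_0^1 t^m f^{(m+1)}(tx)\,dt$ for $0\le m\le k-1$, and each such integrand is jointly continuous in $(t,x)$, giving continuity of $g^{(m)}$. An alternative, more elementary route avoiding differentiation under the integral: prove it by induction on $k$. For $k=1$, one checks directly that $g(x)=f(x)/x\to f'(0)$ as $x\to0$ by the definition of the derivative, so $g$ is continuous. For the inductive step, compute $g'(x)=\frac{x f'(x)-f(x)}{x^2}$ for $x≠0$; setting $h(x):=x f'(x)-f(x)$ one has $h(0)=0$, $h'(x)=x f''(x)$, so $h∈\cC^{k-1}$ and $h$ vanishes at $0$, whence by the induction hypothesis applied once, $h(x)/x∈\cC^{k-2}$; then $g'(x)=\frac{h(x)/x}{x}$, and applying the induction hypothesis a \emph{second} time (to $h(x)/x$, which is $\cC^{k-2}$ and vanishes at $0$) gives $g'∈\cC^{k-2}$, hence $g∈\cC^{k-1}$. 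One must also confirm that this $g'$ obtained for $x≠0$ extends continuously to the value forced by the limit definition at $0$, which the induction delivers automatically.

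The step I expect to be the main obstacle — or at least the one requiring the most care — is bookkeeping the behavior at $x=0$: making sure that the $g$, $g'$, \dots defined piecewise (one formula for $x≠0$, a limiting value at $x=0$) are genuinely the derivatives of one another at the origin, rather than merely continuous functions that happen to coincide with the derivative away from $0$. The integral-representation approach $g(x)=\int_0^1 f'(tx)\,dt$ sidesteps this cleanly, since it is a single formula valid everywhere and differentiation under the integral sign is justified by the continuity of $f^{(m+1)}$ on the compact set $[0,1]\times[-r,r]$; I would therefore favor that route. A final remark worth including: the loss of exactly one order of differentiability is sharp, as the example $f(x)=x|x|^k$ (which is $\cC^k$ but not $\cC^{k+1}$) shows $g(x)=|x|^k$ is only $\cC^{k-1}$; this justifies the $+n(n+1)/2$ overhead appearing in Theorem~\ref{thm:asdiff}.
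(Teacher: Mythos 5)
Your integral-representation route is sound as far as it goes, but it proves a weaker statement than the one in the lemma body. The stated hypothesis is only that $f$ is $k$-times \emph{differentiable}, and the paper stresses immediately after the lemma that $f^{(k)}$ is \emph{not} required to be continuous. Differentiating $g(x)=\int_0^1 f'(tx)\,dt$ under the integral sign $k-1$ times produces the integrand $t^{k-1}f^{(k)}(tx)$ in the last step, and justifying that step (and the continuity of the result) needs $f^{(k)}$ to be continuous, or at least locally bounded --- neither of which follows from mere existence of $f^{(k)}$ (think of $x^2\sin(1/x^2)$, whose derivative exists everywhere but is unbounded near $0$). The paper's own sketch goes through the Taylor remainder of $f$ at $0$ up to order $k$ in Peano form precisely because that needs only the existence of $f^{(k)}(0)$; its remark that ``only showing continuity of $g^{(k-1)}$ requires some work'' refers to the part your integral formula would otherwise make trivial. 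That said, the lemma's \emph{title} does say $f\in\cC^k$, and the only place the lemma is invoked (the proof of Theorem~\ref{thm:asdiff}) uses it in the form $\psi_A\in\cC^k\Rightarrow\psi_{A'}\in\cC^{k-1}$, so your preferred route does establish everything the paper actually needs.

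The elementary inductive fallback, however, has a genuine off-by-one gap. After the first application of the induction hypothesis you know only that $p:=h/x$ is $\cC^{k-2}$; applying the induction hypothesis to a function that is merely $(k-2)$-times differentiable and vanishes at $0$ yields $p/x\in\cC^{k-3}$, hence $g'\in\cC^{k-3}$ and $g\in\cC^{k-2}$ --- one order short of the claim. Already for $k=2$ your induction delivers only $g\in\cC^0$ rather than $g\in\cC^1$; closing that case requires showing that $p=f'-f/x$ is in fact differentiable at $0$ (via the second-order Peano expansion of $f$), which is extra information about this specific $p$ that the induction hypothesis does not supply. The blanket statement is sharp ($x^2\sin(1/x)$ divided by $x$ is continuous but not differentiable at $0$), so no strengthening of the generic hypothesis will rescue the count; you would have to carry the finer structure of $h=xf'-f$ through the induction. (A minor point: your sharpness example $f(x)=x|x|^k$ degenerates to the polynomial $x^{k+1}$ for even $k$; use $f(x)=x^k|x|$ instead.)
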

The lemma can be proven 
by equating the remainder of the Taylor series expansions of $f$ up to term $k$ with that of $g$.
Only showing continuity of $g^{(k-1)}$ requires some work.
Note that we neither require $f^{(k)}$ to be continuous, nor $f(-x)=-f(x)$ or so.

\section{$d$-dimensional (Anti)Symmetry}\label{sec:threed}

This section generalizes the theorems from Sections~\ref{sec:oneds} and \ref{sec:oneda} to $d>1$:
the symmetric polynomial algebraic basis and the generalized Slater determinant representation.

\paradot{Motivation}
We now consider $n∈ℕ$, $d$-dimensional particles with coordinates 
$\v x_i∈ℝ^d$ for particles $i=1,...,n$.
For $d=3$ we write $\v x_i=(x_i,y_i,z_i)^\trp ∈ℝ^3$.
As before, Bosons/Fermions have symmetric/AS
wave functions $χ(\v x_1,...,\v x_n)$.
That is, $χ$ does not change/changes sign under the exchange of two vectors 
$\v x_i$ and $\v x_j$.
It is \emph{not} symmetric/AS under the exchange of individual coordinates e.g.\ $y_i\leftrightarrow y_j$.
$\m X≡(\v x_1,...,\v x_n)$ is a matrix with $n$ columns and $d$ rows.
The (representation of the) symmetry group is $\cS_n^d:=\{S_π^d:π∈S_n\}$ with
$S_π^d(\v x_1,...,\v x_n):=(\v x_{π(1)},...,\v x_{π(n)})$,
rather than $\cS_{n⋅ d}$. 
Functions $f:ℝ^{d⋅n}→ℝ$ invariant under $\cS_n^d$ are sometimes called multisymmetric or block-symmetric,
if calling them symmetric could cause confusion.

\paradot{Algebraic basis for multisymmetric polynomials}
The elementary symmetric polynomials \eqref{eq:sympol} have a generalization to $d>1$ \cite{Weyl:46}.
We only present them for $d=3$. 
The general case is obvious from them.
They can be generated from
\begin{align}\label{eq:sympold}
  \prod_{i=1}^n (1+λx_i+μy_i+νz_i) ~=:~ \sum_{0≤p+q+r≤n}λ^pμ^qν^r e_{pqr}(\m X)
\end{align}
Even for $d=3$ the expressions are rather cumbersome:
\begin{align*}
  e_{pqr}(\m X) &=~ \sum_{\nq\nq{1≤i_1<...<i_p≤n\atop{1≤j_1<...<j_q≤n\atop 1≤k_1<...<k_r≤n }},\text{all}≠\nq\nq\nq\nq}
  x_{i_1}...x_{i_p}y_{j_1}...y_{j_q}z_{k_1}...z_{k_r} \\
  &=~ {1\over p!q!r!}\sum_{π∈S_n}x_{π(1)}...x_{π(p)}y_{π(p+1)}...y_{π(p+q)}z_{π(p+q+1)}...z_{π(p+q+r)}
\end{align*}
One can show that $\{e_{pqr}:p+q+r≤n\}$ is an algebraic basis 
of size $m=({n+3\atop 3})-1$ for all multisymmetric polynomials \cite{Weyl:46}.
Note that constant $e_{000}$ is not included/needed.
For a given $\m X$, their values can be computed in time $O(m n)$ by expanding
\eqref{eq:sympold} or in time $O(m\log n)$ by FFT, where $m=O(n^d)$.
Newton's identities also generalize:
$e_{pqr}(\m X)$ are 
polynomials in the polarized sums $p_{pqr}(\m X):=\sum_{i=1}^n η_{pqr}(\v x_i)$ 
with $η_{pqr}(\v x):=x^p y^q z^r$. The proofs are much more involved than for $d=1$. 
For the general $d$-case we have:
\begin{theorem}[Multisymmetric polynomial algebraic basis]\label{thm:palgb}
  Every continuous (block\-=multi)\-symmetric function $ϕ:ℝ^{n⋅ d}→ℝ$ can be represented 
  as $ϕ(\m X)=g(\sum_{i=1}^n \v{η}(\v x_i))$ with continuous $g:ℝ^m→ℝ$ and $\v{η}:ℝ^d→ℝ^m$ defined as 
  $η_{p_1...p_d}(\v x)=x^{p_1} y^{p_2}...z^{p_d}$ for $1≤p_1+...+p_d≤n$ ($p_i∈\{0,...,n\}$),
  hence $m=({n+d\atop d})-1$.
\end{theorem}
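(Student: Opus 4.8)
The plan is to reproduce, for $d>1$, the three–stage argument behind Theorem~\ref{thm:palgbone}, using the two facts assembled just above: the elementary multisymmetric polynomials $e_{\v p}$ ($1\le|\v p|\le n$) generate the algebra of multisymmetric polynomials, and each $e_{\v p}$ is a fixed polynomial in the polarized power sums $p_{\v p}(\m X)=\sum_{i=1}^n\eta_{\v p}(\v x_i)$. Abbreviate $\v{\beta}(\m X):=\sum_{i=1}^n\v{\eta}(\v x_i)$, whose coordinate indexed by $\v p$ is exactly $p_{\v p}(\m X)$; the goal is to produce a continuous $g:\mathbb{R}^m\to\mathbb{R}$ with $\phi=g\circ\v{\beta}$. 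At a high level this is the standard invariant-theory picture --- a generating set for the invariant ring $\mathbb{R}[\m X]^{\cS_n^d}$ induces a homeomorphism of the orbit space onto a closed subset of $\mathbb{R}^m$, through which every continuous invariant factors --- but I would give a self-contained argument.

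First I would show that $\v{\beta}$ separates $\cS_n^d$-orbits, i.e.\ $\v{\beta}(\m X)=\v{\beta}(\m X')$ implies $\m X'=S_\pi^d(\m X)$ for some $\pi\in S_n$. By the generalized Newton identities each $e_{\v p}(\m X)$ is a polynomial in the $p_{\v p'}(\m X)$, so $\v{\beta}(\m X)$ determines all the $e_{\v p}(\m X)$; these are precisely the coefficients of the generating polynomial $\prod_{i=1}^n(1+\lambda x_i+\mu y_i+\nu z_i)$ of \eqref{eq:sympold} (written in the $d=3$ notation; coefficients of total degree $>n$ vanish automatically), and unique factorization in $\mathbb{R}[\lambda,\mu,\nu]$ recovers the linear factors $1+\lambda x_i+\mu y_i+\nu z_i$, hence the multiset $\{\v x_1,\dots,\v x_n\}$, up to order. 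Thus $\v{\beta}$ descends to an injection $\bar{\v{\beta}}$ of the quotient $\mathbb{R}^{nd}/\cS_n^d$ into $\mathbb{R}^m$.

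Next I would check that $\v{\beta}$ is proper. For $n\ge2$ the coordinates $p_{(2,0,\dots,0)},\dots,p_{(0,\dots,0,2)}$ sum to $\|\m X\|_F^2$, which is coercive, so $\|\v{\beta}(\m X)\|\to\infty$ as $\|\m X\|\to\infty$ (for $n=1$, $\v{\beta}$ is the identity $\mathbb{R}^d\to\mathbb{R}^d$), hence preimages of compacta are compact. A proper continuous map is closed, so $\mathrm{Image}(\v{\beta})$ is closed in $\mathbb{R}^m$; composing with the (proper) quotient map $\mathbb{R}^{nd}\to\mathbb{R}^{nd}/\cS_n^d$ shows that $\bar{\v{\beta}}$ is a proper continuous bijection onto $\mathrm{Image}(\v{\beta})$, hence a homeomorphism.

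Finally, a continuous multisymmetric $\phi$ descends to a continuous $\tilde\phi$ on $\mathbb{R}^{nd}/\cS_n^d$, so $g_0:=\tilde\phi\circ\bar{\v{\beta}}^{-1}$ is continuous on the closed set $\mathrm{Image}(\v{\beta})$ and satisfies $\phi=g_0\circ\v{\beta}$; the Tietze extension theorem then yields a continuous $g:\mathbb{R}^m\to\mathbb{R}$ extending $g_0$, which is the required outer function. The main obstacle is the orbit-separation step, which rests entirely on the generalized Newton identities for $d>1$ --- the ingredient flagged above as ``much more involved than for $d=1$''; once it is granted, the coercivity and extension arguments are routine and dimension-independent. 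A secondary subtlety, in contrast to the ``composition by inversion'' examples of Section~\ref{sec:oneds}, is that $\v{\beta}$ is injective only on orbits and not on $\mathbb{R}^{nd}$, so one must pass to the quotient before inverting, and it is properness --- not bare bijectivity --- that makes $\bar{\v{\beta}}^{-1}$ continuous.
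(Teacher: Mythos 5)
Your argument is correct, and it rests on the same two ingredients the paper itself invokes for this theorem: Weyl's elementary multisymmetric polynomials $e_{\v p}$ and the generalized Newton identities expressing them in the polarized power sums $p_{\v p}(\m X)=\sum_i\eta_{\v p}(\v x_i)$ (both of which the paper, like you, cites rather than proves). The difference is in how the algebraic statement is upgraded to the theorem's claim of \emph{exact} representation of every continuous multisymmetric $\phi$ by a \emph{continuous} $g$: the paper follows the $d=1$ template (``algebraic basis, hence by closure\dots'') and leaves the topological step to the cited literature, whereas you make it explicit --- orbit separation via the $e_{\v p}$'s being the coefficients of $\prod_i(1+\lambda x_i+\mu y_i+\nu z_i)$ together with unique factorization of the normalized linear factors, coercivity of the degree-two power sums giving properness (with the $n=1$ case handled separately), the induced map on the orbit space being a closed continuous bijection onto a closed image and hence a homeomorphism, and finally Tietze extension of $\tilde\phi\circ\bar{\v\beta}^{-1}$ to all of $\mathbb{R}^m$. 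This is exactly the missing bridge from ``generating set of the invariant ring'' to ``continuous outer function $g$,'' and your observation that properness, not mere injectivity on orbits, is what makes $\bar{\v\beta}^{-1}$ continuous is the right one; your write-up is in this respect more complete than the paper's own treatment, while being fully consistent with its intended route.
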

The basis can be computed in time $O(m⋅n⋅d)=O(d⋅n^{d+1})$ for $n\gg d$ ($=O(nd^{n+1})$ for $d\gg n$).
Note that there could be a much smaller functional bases of size $m=dn$ as per 
``our'' composition-by-inversion argument for continuous representations
in Section~\ref{sec:oneds}, which readily generalizes to $d>1$,
while the above minimal algebraic basis has larger size $m=O(n^d)$ for $d>1$.
It is an open question whether a continuous functional basis of size $O(d n)$ exists,
whether in polarized form \eqref{eq:sumbasis} or not.

\paradot{Anti-Symmetry}
As in Section~\ref{sec:oneda}, consider $Φ_{ij}(\m X):=φ_i(\v x_j|\v x_{≠j})$,
\begin{align*}
  Φ(\m X) ~=~
  \left(\begin{matrix}
      φ_1(\v x_1|\v x_{≠1}) & \cdots & φ_n(\v x_1|\v x_{≠1}) \\
          \vdots      & \ddots &    \vdots       \\
      φ_1(\v x_n|\v x_{≠n}) & \cdots & φ_n(\v x_n|\v x_{≠n})
  \end{matrix}\right)
\end{align*}
where $φ_i(\v x_j|\v x_{≠j})$ is symmetric in $\v x_{≠j}$.
We can show a similar representation result as in Theorem~\ref{thm:as},
but weaker and via a different construction.

\begin{theorem}[Representation of all AS $ψ$]\label{thm:asd}
   For every AS function $ψ(\m X)$ there exist 
   $φ_i(\v x_j|\v x_{≠j})$ symmetric in $\v x_{≠j}$ such that $ψ(\m X)=\det Φ(\m X)$.
\end{theorem}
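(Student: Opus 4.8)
The plan is to imitate the proof of Theorem~\ref{thm:as}, but --- since there is no analogue of the Vandermonde determinant $∆$ in $d>1$ dimensions --- I would replace $∆$ by a \emph{discrete} surrogate built from an arbitrary total order $\prec$ on $ℝ^d$ (e.g.\ the lexicographic one). The price is that the $φ_i$ produced this way are discontinuous; this seems unavoidable with this approach, and is why Theorem~\ref{thm:asd} is stated without any regularity and is weaker than Theorem~\ref{thm:as}.

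Concretely, for $\m X=(\v x_1,...,\v x_n)$ with pairwise distinct columns let $\mathrm{rank}_j(\m X):=1+\#\{k≠j:\v x_k\prec\v x_j\}∈\{1,...,n\}$ be the rank of $\v x_j$ among the $n$ particles, and let $\mathrm{sort}(\v x_{≠j})$ denote the remaining $n-1$ particles listed in $\prec$-increasing order; both depend on $\v x_j$ and only on the \emph{multiset} $\v x_{≠j}$, hence are symmetric in $\v x_{≠j}$. I would then set
\[
  φ_i(\v x_j|\v x_{≠j}) := \mathbf{1}[\mathrm{rank}_j(\m X)=i] \quad (i=2,...,n),
  \qquad φ_1(\v x_j|\v x_{≠j}) := ψ(\v x_j,\mathrm{sort}(\v x_{≠j})),
\]
each of which is symmetric in $\v x_{≠j}$ as required. (Where two columns coincide the ranks are not well defined, but any fixed symmetric-in-$\v x_{≠j}$ convention works; the last step shows the choice is irrelevant.)

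The main step is to show that for pairwise distinct $\m X$ the Leibniz expansion $\det Φ(\m X)=\sum_{τ∈S_n}σ(τ)\prod_{i=1}^n φ_i(\v x_{τ(i)}|\v x_{≠τ(i)})$ collapses to a single term. Let $π∈S_n$ be the unique permutation with $\v x_{π(1)}\prec\cdots\prec\v x_{π(n)}$, so that $\mathrm{rank}_m(\m X)=i\Leftrightarrow m=π(i)$. Then $\prod_{i=2}^n\mathbf{1}[\mathrm{rank}_{τ(i)}(\m X)=i]=\prod_{i=2}^n\mathbf{1}[τ(i)=π(i)]$ vanishes unless $τ=π$, leaving $\det Φ(\m X)=σ(π)\,φ_1(\v x_{π(1)}|\v x_{≠π(1)})=σ(π)\,ψ(\v x_{π(1)},...,\v x_{π(n)})$, the last equality because $\mathrm{sort}(\v x_{≠π(1)})=(\v x_{π(2)},...,\v x_{π(n)})$. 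Since $ψ$ is AS, $ψ(\v x_{π(1)},...,\v x_{π(n)})=σ(π)\,ψ(\m X)$, hence $\det Φ(\m X)=σ(π)^2ψ(\m X)=ψ(\m X)$; note how the sign $σ(π)$ shed by the determinant plays exactly the role of the sign of the Vandermonde factor in the $d=1$ proof. Finally, on the collision set $ψ(\m X)=0$ by anti-symmetry, and $\det Φ(\m X)=0$ because $\v x_i=\v x_j$ makes rows $i$ and $j$ of $Φ$ identical (then $\v x_{≠i}=\v x_{≠j}$ as multisets and $\v x_i=\v x_j$, so every $φ_l$ agrees on the two rows), irrespective of the rank convention. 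Thus $ψ=\det Φ$ everywhere on $ℝ^{n⋅d}$.

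I do not anticipate a genuine obstacle: the work is entirely the sign-and-collapse bookkeeping above, plus the routine checks that the $φ_i$ are symmetric in $\v x_{≠j}$ and everywhere defined; the statement is ``soft'' precisely because discontinuous $φ_i$ are allowed. As an independent check one can instead reduce to Theorem~\ref{thm:as} directly: pick any injection $ι:ℝ^d\hookrightarrow ℝ$, extend $ψ\circ(ι^{-1},...,ι^{-1})$ by $0$ outside $\mathrm{Image}(ι)^n$ to an AS function on $ℝ^n$, apply Theorem~\ref{thm:as}, and compose the resulting single-particle functions back with $ι$; the $\prec$-based construction above is essentially this reduction with $ι$ chosen order-preserving, with the advantage of exhibiting the loss of continuity explicitly.
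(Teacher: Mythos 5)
Your proposal is correct and takes essentially the same route as the paper: both impose a total order on $\mathbb{R}^d$, use the sorting permutation $\bar\pi$ to make all but one term of the determinant vanish via rank-indicator entries, and cancel the resulting sign through $\sigma(\bar\pi)^2=1$ using anti-symmetry of $\psi$. The only cosmetic differences are that the paper makes $\Phi$ a permuted diagonal matrix (placing $\psi^{1/n}$, or $\psi,1,\dots,1$, on the sorted diagonal) whereas your first column is dense, and that you treat the collision set a bit more explicitly (equal rows) than the paper does.
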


\begin{proof}
Define any total order $<$ on $ℝ^d$. For definiteness choose ``lexicographical'' order
$\v x_i<\v x_j$ iff $x_i<x_j$ or ($x_i=x_j$ and $y_i<y_j$) or ($x_i=x_j$ and $y_i=y_j$ and $z_i<z_j$),
etc.\ for $d>3$.
Let $\bar{π}∈S_n$ be the
permutation which sorts the particles $\v x_i∈ℝ^d$ in increasing order,
i.e.\ $\v x_{\bar{π}(1)}≤\v x_{\bar{π}(2)}≤...≤\v x_{\bar{π}(n)}$. 
Note that this permutation depends on 
$\m X$, but since $\m X$ is held fixed for the whole proof, we don't need to worry about this.
Now temporarily assume $ψ(\m X)≥0$, and define
\begin{align}\label{eq:asdphi}
  φ_i(\v x_j|\v x_{≠j}) ~:=~ \left\{ 
   \begin{matrix}
     ψ(\v x_{\bar{π}(1)},...,\v x_{\bar{π}(n)})^{1/n} & \text{if} & j=\bar{π}(i) \\
     0                                    & \text{else} &      
   \end{matrix} \right.
\end{align}
which is symmetric in $\v x_{≠j}$. 
If the $\v x_i$ are already sorted, i.e.\ $\bar{π}(i)=i~\forall i$,
then $φ_i(\v x_j|\v x_{≠j})=0$ unless $j=i$.
Hence $Φ(\m X)$ is diagonal with 
\begin{align*}
  \det Φ(\v x) ~=~
  \left|\begin{matrix}
    φ_1(\v x_1|\v x_{≠1})  & \cdots & 0 \\
       \vdots     & \ddots & \vdots    \\
         0   & \cdots & φ_n(\v x_n|\v x_{≠n})
  \end{matrix}\right| 
  ~=~ \prod_{i=1}^n φ_i(\v x_i|\v x_{≠i}) 
  ~=~ ψ(\m X)
\end{align*}
For a general permutation $\bar{π}$, $Φ$ is a permuted diagonal matrix
with only row $\bar{π}(i)$ being non-zero in column $i$:
\begin{align*}
  \det Φ(\v x) &=
  \left|\begin{matrix}
    \vdots  & \cdots & \vdots \\
    \vdots  & \ddots & φ_n(\v x_{\bar{π}(n)}|\v x_{≠\bar{π}(n)}) \\
     φ_1(\v x_{\bar{π}(1)}|\v x_{≠\bar{π}(1)})   & \ddots & \vdots    \\
      \vdots  & \cdots &    \vdots 
  \end{matrix}\right|  
  ~=~ σ(\bar{π})\prod_{i=1}^n φ_i(\v x_{\bar{π}(i)}|\v x_{≠\bar{π}(i)}) \\
  &=~ σ(\bar{π}) ψ(\v x_{\bar{π}(1)},...,\v x_{\bar{π}(n)})
  ~=~ σ(\bar{π})σ(\bar{π}) ψ(\v x_1,...,\v x_n)
  ~=~ ψ(\m X)
\end{align*} 
where we exploited that $ψ$ is AS.  
If $n$ is odd, this construction also works for negative $ψ$.
In general we can replace $ψ^{1/n}$ in (\ref{eq:asdphi}) by
$\text{sign(ψ)}|ψ|^{1/n}$ in $φ_1$ and by $|ψ|^{1/n}$ for the other $φ_i$,
or even $ψ$ for $φ_1$ and $1$ for the other $φ_i$.
\qed\end{proof}
Note that for $d=1$, $(x_{\bar{π}(1)},...,x_{\bar{π}(n)})$ is continuous 
in $(x_1,...,x_n)$, and $\bar{π}$ is (only) discontinuous when $x_i=x_j$ for some $i≠j$,
but then $ψ=0$, hence $φ_i$ in (\ref{eq:asdphi}) is continuous for
continuous $ψ$. Unfortunately this is no longer true for $d>1$.
For instance for $ψ(\v x_1,\v x_2)=y_1-y_2$, 
$φ_1({x_1\atop 1},{x_2\atop 0})=[\![x_1<x_2]\!]$ is a step function.

For $n=2$ and any $d$, any AS continuous/smooth/analytic/$\cC^k$ function $ψ(\v x_1,\v x_2)$ has an easy 
continuous/smooth/analytic/$\cC^k$ representation as a GSD.
Choose $φ_1(\v x_1|\v x_2)≡\fr12$ and $φ_2(\v x_1|\v x_2):=ψ(\v x_1,\v x_2)$.
Whether this generalizes to $n>2$ and $d>1$ is an open problem.

\section{Universality of Neural Networks}\label{sec:nn}

This section is mainly definitions and a few known Neural Network (NN) approximation results,
for setting the stage of the next section for Equivariant MLP.
We present the two key theorems for polynomial-based proofs of universality of NN,
namely that MLP can approximate any polynomial, which in turn can approximate any continuous function.
The NN approximation theory literature is vast, 
so these two results and a few key references must do.

\paradot{Multi-Layer Perceptron (MLP)}
The standard Multi-Layer Perceptron (MLP) aims at approximately representing 
functions $f:ℝ^n→ℝ^m$ ($n,m∈ℕ$, most often $m=1$) as follows:
Let $σ:ℝ→ℝ$ be some non-polynomial\footnote{For Deep networks this can be relaxed to non-linear.} 
continuous activation function.
If not mentioned explicitly otherwise, we will assume that $σ(z)=\max\{0,z\}$
or $σ(z)=\tanh(z)$, and results are valid for both. 
Most choices will also do in practice and in theory,
though there can be some differences. We use function vectorization notation 
$\v{σ}(\v x)_i:=σ(x_i)$. 

The NN input is $\v x∈ℝ^n$, its output is $\v y∈ℝ^m$. 
We use upper indices $(\ell)$ and $(\ell+1)$ without braces for indexing layers. They are never exponents.
Layer $\ell+1$ is computed from layer $\ell∈\{0,...,L-1\}$ by
\begin{align}\label{eq:nn}
  \v x^{\ell+1} ~:=~ \v{τ}^\ell(\v x^\ell) 
  ~≡~ \v{τ}_{\m W^\ell,\v u^\ell}(\v x^\ell)
  ~:=~ \v{σ}(\m W^\ell\v x^\ell+\v u^\ell)
\end{align}
where $\v{τ}^\ell:ℝ^{n_\ell}→ℝ^{n_{\ell+1}}$ is the transfer function 
with synaptic weight matrix $\m W^\ell∈ℝ^{n_{\ell+1}×n_\ell}$ 
from layer $\ell$ to layer $\ell+1$, 
and $\v u^\ell∈ℝ^{n_{\ell+1}}$ the biases of neurons in layer $\ell+1$,
and $n_\ell$ the width of layer $\ell$.

The NN input is $\v x=\v x^0∈ℝ^n$ ($n=n_0$) and the output of an $L$-layer NN is
$\v y=\v x^L∈ℝ^m$ ($m=n_L$).
The NN at layer $\ell+1$ computes the function $\v{ν}^{\ell+1}(\v x):=\v{τ}^\ell(\v{ν}^\ell(\v x))$
with $\v{ν}^0(\v x):=\v x$ and $\v y:=\v{ν}^L(\v x)$.
If $σ$ has bounded range, functions $f$ of greater range cannot be represented.
For this reason, the non-linearity $σ$ in the last layer is often removed,
which we also assume when relevant, and call this an $L-1$-hidden layer NN.

Any $σ$ which is continuously differentiable at least in a small region of its domain,
which is virtually all $σ$ used in practice, 
can approximate arbitrarily well linear hidden neurons and hence skip-connections
by choosing very small/large weights \cite{Pinkus:99}, 
which we occasionally exploit.

\paradot{Literature on approximation results}
There are a large number of NN representation theorems which tell us
how well which functions $f$ can be approximated, 
for function classes of different smoothness (e.g.\ Lipschitz continuous or $\cC^k$),
domain ($ℝ^n$ or a compact subset, typically $[0;1]^n$),
for different distance measures (e.g.\ $L^p$-norm for $p∈[1;∞]$ or Sobolev),
asymptotic or finite bounds in terms of accuracy $ε$, 
network width $N:=\max\{n_0,...,n_L\}$ or depth $L$,
esp.\ deep ($L\gg N=O(1)$) vs.\ shallow ($N\gg L=O(1)$),
or number of neurons $n_+=n_1+...+n_L$,
or total number of weights and biases $\sum_{\ell=0}^{L-1}(n_\ell+1)n_{\ell+1}$,
or only counting the non-zero parameters,
different activation functions $σ$, 
and that are only some choices within this most simple MLP NN model.
See e.g.\ \cite{Yarotsky:17,Lu:20,Grohs:19intro,Rolnick:18,Lin:17} 
and for surveys \cite{Pinkus:99,Fan:19}, to name a few.
A powerful general approximation theory has been developed in \cite{Grohs:19intro},
which very crudely interpreted shows that any function that can approximately be computed
can also be computed by a NN with roughly comparable resources,
which includes even fractal functions such as the Weierstrass function.
Since NN can efficiently emulate Boolean circuits, 
this may not be too surprising.
For simplicity we focus on the most-easy-to-state results,
but briefly mention and reference improvements or variations.

\paradot{Approximation results}
We say that $ρ:ℝ^n→ℝ^m$ \emph{uniformly approximates} $f$ on $[-D;D]^n$ for some $D>0$, 
or $ρ$ \emph{$ε$-approximates} $f$, if 
\begin{align*}
  ||f-ρ||_∞ ~:=~ \sup_{\v x∈[-D;D]^n}\max_{1≤i≤n}|f_i(\v x)-ρ_i(\v x)| ~≤~ ε
\end{align*}
Other norms, most notably $p$-norms and Sobolev norms have been considered \cite{Pinkus:99}.
Adaptation to other compact subsets of $ℝ^n$ of most results is straightforward,
but results on all of $ℝ^n$ are rarer/weaker.
We say that $f$ can be \emph{approximated by a function class} $\cF⊆ℝ^n→ℝ^m$
if for every $ε>0$ and $D$ there exists a $ρ∈\cF$ that 
$ε$-approximates $f$ on (compact) hypercube $[-D;D]^n$,
which is called convergence uniform on compacta.
This is equivalent to $f$ being in the topological closure of $\overline\cF$ 
w.r.t.\ the compact-open topology.
The class $\cF$ we are interested in here
is the class of MLPs defined above and subsets thereof:
\begin{align*}
  \MLP ~:=~ \{ \v{ν}^L ~:~ \m W^\ell∈...,\v u^\ell∈...; n_\ell≤N,\ell≤L; n,m,L,N∈ℕ \} ~⊆~ ℝ^*→ℝ^*
\end{align*}
Note that continuous functions on compact domains have compact range,
which can be embedded in a hyper-cube,
and finite compositions of continuous functions are continuous,
hence each layer as well as the whole MLP maps $[-D;D]^n$ 
continuously into $[-D';D']^m$ for some $D'$.
The two most important results for us are
\begin{theorem}[NN approximation \cite{Pinkus:99}]\label{thm:polymlp}
  Every multivariate polynomial can be approximated by a (1-hidden-layer) MLP.
\end{theorem}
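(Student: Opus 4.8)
The plan is to reduce the multivariate statement to the univariate one and then invoke the classical single-variable result. First I would recall the univariate fact (which is the content of the standard universal approximation theorem of \cite{Pinkus:99}): if $\sigma$ is continuous and non-polynomial, then $\mathrm{span}\{\sigma(wx+u):w,u\in\mathbb{R}\}$ is dense in $\mathcal{C}^0(\mathbb{R})$ with respect to uniform convergence on compacta. In particular, every univariate polynomial $p(t)=\sum_{k=0}^{K}c_k t^k$ can be $\varepsilon$-approximated on $[-D;D]$ by a $1$-hidden-layer MLP $\sum_j a_j\sigma(w_j t+u_j)$.

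Next I would pass from one variable to $n$ variables. A general multivariate polynomial $P:\mathbb{R}^n\to\mathbb{R}$ is a finite linear combination of monomials $\mathbf{x}^{\mathbf{k}}=x_1^{k_1}\cdots x_n^{k_n}$. The key trick is \emph{polarization}: each monomial, and more generally each homogeneous form, can be written as a finite linear combination of powers of linear functionals, i.e.\ terms of the form $(\boldsymbol{\alpha}^{\!\top\!}\mathbf{x})^{r}$ for suitable $\boldsymbol{\alpha}\in\mathbb{R}^n$ and $r\in\mathbb{N}_0$ (for instance via finite differences of the map $t\mapsto(\boldsymbol{\alpha}^{\!\top\!}\mathbf{x}+t)^{r}$, or simply by noting that the powers $(\boldsymbol{\alpha}^{\!\top\!}\mathbf{x})^r$ for enough directions $\boldsymbol{\alpha}$ span the space of polynomials of degree $\le r$). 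Hence it suffices to approximate a single ridge function $\mathbf{x}\mapsto q(\boldsymbol{\alpha}^{\!\top\!}\mathbf{x})$ with $q$ a univariate polynomial.

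For a ridge function this is immediate: set $t=\boldsymbol{\alpha}^{\!\top\!}\mathbf{x}$, which ranges over a compact interval $[-D';D']$ as $\mathbf{x}$ ranges over $[-D;D]^n$, approximate $q$ on that interval by $\sum_j a_j\sigma(w_j t+u_j)$ using the univariate result, and substitute back to get $\sum_j a_j\sigma\big((w_j\boldsymbol{\alpha})^{\!\top\!}\mathbf{x}+u_j\big)$, which is exactly a $1$-hidden-layer MLP on $\mathbb{R}^n$. Finally I would assemble: write $P$ as a finite sum of ridge monomials, approximate each to accuracy $\varepsilon/(\text{number of terms})$, concatenate the hidden units, and use the triangle inequality. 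Since a finite concatenation of $1$-hidden-layer MLPs is again a $1$-hidden-layer MLP (just stack the hidden neurons), this yields the claim.

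The only mildly delicate point — and the place I would be most careful — is the polarization step: making sure that every multivariate polynomial really is a \emph{finite} linear combination of ridge powers over $\mathbb{R}$ (not just over $\mathbb{C}$, and with no hidden dependence on $\sigma$). This is standard (it is the same observation underlying the proof that ridge functions span all polynomials), but it is the one ingredient that is not purely "univariate + bookkeeping." Everything else is routine: the univariate density theorem is quoted from \cite{Pinkus:99}, and the error aggregation is a finite triangle-inequality argument.
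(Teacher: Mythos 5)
Your argument is correct. The paper itself gives no proof of this statement --- it is quoted directly from Pinkus's survey --- and your reconstruction is essentially the standard argument from that reference: univariate density of $\mathrm{span}\{\sigma(wx+u)\}$ for continuous non-polynomial $\sigma$, reduction of multivariate polynomials to ridge powers $(\boldsymbol{\alpha}^{\!\top\!}\mathbf{x})^r$ via polarization (which does hold over $\mathbb{R}$, e.g.\ by the finite-difference identity you mention), and a finite triangle-inequality assembly, noting that stacking hidden units keeps the network at one hidden layer. The only hypothesis you should make explicit is that $\sigma$ is non-polynomial and continuous, which is exactly the standing assumption the paper places on its activation functions, so there is no gap.
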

Convergence is uniform on compacta, but also holds w.r.t.\ many other metrics and domains.
For $σ=\max\{0,⋅\}$ one can show that the depth $L$ of the required network grows only 
with $O(\ln ε^{-1})$ and width $N$ is constant.
\cite{Grohs:19intro} shows $N=16$ and $L=O(d+\ln ε^{-1})$ for univariate polynomials of degree $d$,
which easily generalizes to the multivariate case.
Even better, for any twice continuously differentiable non-linear $σ$ such as $\tanh$,
the size of the network does not even need to grow for $ε→0$.
Multiplication $x_1⋅x_2$ can arbitrarily well be approximated by 4 neurons
\cite{Lin:17}, which then allows to compute any given polynomial of degree $d$ in $n$ variables 
consisting of $m$ monomials to arbitrary precision
by a NN of size $O(m⋅n⋅\ln d)$ independent $ε$ \cite{Rolnick:18}.

The other result is the classical Stone-Weierstrass approximation theorem:
\begin{theorem}[Stone-Weierstrass]\label{thm:fctpoly}
  Every continuous function $f:ℝ^n→ℝ^m$ can be approximated by $m$ (multivariate) polynomials.
\end{theorem}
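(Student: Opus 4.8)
The plan is to reduce the vector-valued, multivariate statement to the classical scalar Weierstrass approximation theorem on an interval, which I will assume as the known bedrock fact. First I would observe that it suffices to treat each output coordinate separately: if $f=(f_1,\dots,f_m):ℝ^n→ℝ^m$ is continuous, then each $f_i:ℝ^n→ℝ$ is continuous, and an $ε$-approximation of the vector $f$ in the $\|\cdot\|_∞$ sense is obtained by $ε$-approximating each $f_i$ individually by a polynomial $p_i$ and setting $ρ:=(p_1,\dots,p_m)$. So the task collapses to: every continuous $f:ℝ^n→ℝ$ can be approximated uniformly on every hypercube $[-D;D]^n$ by a polynomial in $n$ variables.

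For the scalar multivariate case I would proceed by induction on the dimension $n$. The base case $n=1$ is exactly the classical Weierstrass theorem on $[-D;D]$ (realised, e.g., by Bernstein polynomials after an affine rescaling to $[0;1]$). For the inductive step, fix $ε>0$ and a continuous $f$ on the compact cube $K=[-D;D]^n$. Since $K$ is compact, $f$ is uniformly continuous, so there is a $δ>0$ with $|f(\v x)-f(\v y)|≤ε/3$ whenever $\|\v x-\v y\|_∞≤δ$. Partition the last coordinate axis $[-D;D]$ into finitely many subintervals of length $<δ$ with endpoints $t_0<t_1<\dots<t_N$; on each, approximate $f$ by a ``staircase in the last variable'' built from the slices $f(\cdot,t_j):[-D;D]^{n-1}→ℝ$, each of which is continuous and hence, by the induction hypothesis, $ε/3$-approximable by an $(n-1)$-variable polynomial $q_j$. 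Then one interpolates the $q_j$ in the last variable using a fixed partition-of-unity made of one-variable polynomials — e.g. products $\prod_{k≠j}(t_k - t)/(t_k - t_j)$ — or, more cleanly, one simply applies the one-variable Weierstrass theorem once more to smooth the staircase $t\mapsto q_{j(t)}(\v x')$ in $t$, uniformly in $\v x'$. Summing the three error contributions ($ε/3$ from uniform continuity across each slab, $ε/3$ from the induction hypothesis, $ε/3$ from the last-variable polynomial interpolation) gives a polynomial in $n$ variables within $ε$ of $f$ on $K$.

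Since $D$ and $ε$ were arbitrary, $f$ is approximable by polynomials uniformly on compacta, i.e. lies in the closure of the polynomials in the compact-open topology, which is exactly the assertion; combining the $m$ coordinates finishes the proof. The main obstacle — really the only non-bookkeeping point — is the inductive passage from one to several variables: naively gluing slice-wise approximations produces a function that is polynomial in $\v x'$ but only piecewise-defined in the last variable, so the care is in replacing that piecewise dependence by a genuine polynomial dependence without losing control of the error uniformly over the other $n-1$ variables. The partition-of-unity / repeated-Weierstrass device above handles this, but it is the step where the $ε/3$ bookkeeping must be done honestly; everything else (uniform continuity on compacta, the affine rescaling to $[0;1]$, separating the $m$ output coordinates) is routine. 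An alternative, which sidesteps the induction entirely, is to invoke the abstract Stone–Weierstrass theorem: the polynomials form a subalgebra of $\cC(K)$ that contains the constants and separates points, hence is dense; I would mention this as the conceptual one-line proof while keeping the constructive induction as the self-contained argument.
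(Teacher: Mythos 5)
The paper does not actually prove this statement: Theorem~\ref{thm:fctpoly} is quoted as the classical Stone--Weierstrass theorem and used as a black box, so there is no internal proof to match your argument against. Your reduction to scalar outputs and your closing remark --- that the polynomials form a subalgebra of $\cC(K)$ containing the constants and separating points, hence are dense by the abstract Stone--Weierstrass theorem --- are correct and are exactly the standard justification the paper implicitly relies on (multivariate Bernstein polynomials or convolution with polynomial kernels would do equally well).

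Your self-contained inductive construction, however, has a genuine gap in the blending step. The Lagrange basis functions $\prod_{k\neq j}(t_k-t)/(t_k-t_j)$ are not a usable partition of unity: they sum to $1$ but are neither nonnegative nor uniformly bounded, and their size between nodes (the Lebesgue constant) grows without control as the number of nodes increases --- this is precisely why Lagrange interpolation of merely continuous functions need not converge uniformly (Faber's theorem). Writing the error as $\sum_j\bigl(q_j(\v x')-f(\v x',t)\bigr)\ell_j(t)$, the far-away terms are $O(\sup|f|)$ in size and are multiplied by basis functions that are not small, so the $\varepsilon/3$ bookkeeping does not close. The fallback you offer, ``apply the one-variable Weierstrass theorem to smooth the staircase $t\mapsto q_{j(t)}(\v x')$,'' also fails as stated: the staircase is discontinuous in $t$, and a uniform limit of polynomials is continuous, so no uniform polynomial approximation of it exists. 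The construction is salvageable: blend the slices with \emph{nonnegative} continuous hat functions $h_j(t)$ supported near $t_j$ with $\sum_j h_j\equiv 1$ (so the blend is a convex combination of nearby slice values and stays within $2\varepsilon/3$ of $f$), and only then replace each of the finitely many $h_j$ by a one-variable polynomial within $\varepsilon/(3NM)$, where $M$ bounds the $q_j$ on the cube. With that replacement your induction is sound; as written, the two devices you name are the step that would fail.
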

Again convergence is uniform on compacta, but many extensions are known.
Together with Theorem~\ref{thm:polymlp} this implies that MLPs can approximate any continuous function.
For analytic $f$ on $[-D;D]^n$ and $σ=\max\{0,⋅\}$ (and likely most other $σ$),
a NN of depth $L=O(\ln ε^{-1})$ and width $N=O(1)$ suffices \cite{Grohs:19intro}.
For Lipschitz $f$ on $[-D;D]^n$ and $σ=\max\{0,⋅\}$,
a NN of size $N⋅L=\tilde O(m⋅ε^{-n/2})$ suffices \cite[Table 1]{Lu:20}.

Both theorems together show that NN can approximate most functions well,
and the discussion hints at, and \cite{Grohs:19intro} shows, with essentially 
optimal scaling in accuracy $ε$.

\section{Universal Equivariant Networks}\label{sec:covnn}

In this section we will restrict the representation power of MLPs 
to equivariant functions and prove their universality by construction in 4 steps.
This is the penultimate step towards (anti)symmetric NN.

\paradot{Equivariance and all-but-one symmetry} 
We are mostly interested in (anti)symmetric functions,
but for (de)composition we need equivariant functions,
and directly need to consider the $d$-dimensional case.
A function $\v{φ}:(ℝ^d)^n→(ℝ^{d'})^n$ is called equivariant under permutations
if $\v{φ}(S_π^d(\m X))=S_π^{d'}(\v{φ}(\m X))$ for all permutations $π∈S_n$.
With slight abuse of notation we identify $(\v{φ}(\m X))_1≡φ_1(\m X)≡φ_1(\v x_1,\v x_2,...,\v x_n)≡φ_1(\v x_1,\v x_{≠1})$ 
with $φ_1(\v x_1|\v x_{≠1})$. 
The following key Lemma shows that $φ_1$ suffices to describe all of $\v{φ}$. 

\begin{lemma}[All-but-one symmetry = equivariance]\label{lem:cov}
  A function $\v{φ}:(ℝ^d)^n→(ℝ^{d'})^n$ is equivariant (under permutations) if and only if 
  $φ_i(\m X)=φ_1(\v x_i|\v x_{≠i})$ $∀i$ and $φ_1(\v x_i|\v x_{≠i})$ is symmetric in $\v x_{≠i}$. 
\end{lemma}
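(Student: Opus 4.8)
The plan is to prove both directions of the equivalence directly from the definition of equivariance, $\v{\varphi}(S_\pi^d(\m X)) = S_\pi^{d'}(\v{\varphi}(\m X))$ for all $\pi \in S_n$, where the left side permutes the input blocks and the right side permutes the output blocks. The key observation is that a permutation acts on component $i$ of the output by relabelling, so testing the definition on cleverly chosen permutations isolates the structure of $\varphi_1$.

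First I would prove the ``only if'' direction. Suppose $\v{\varphi}$ is equivariant. Fix $i$ and pick any permutation $\pi$ with $\pi(1) = i$; then $(S_\pi^{d'}(\v{\varphi}(\m X)))_1 = \varphi_{\pi(1)}(\m X) = \varphi_i(\m X)$, while $(\v{\varphi}(S_\pi^d(\m X)))_1 = \varphi_1(\v x_{\pi(1)}, \v x_{\pi(2)}, \dots, \v x_{\pi(n)}) = \varphi_1(\v x_i \mid \v x_{\neq i})$ up to the ordering of the $\neq i$ arguments. Comparing, $\varphi_i(\m X) = \varphi_1(\v x_i \mid \v x_{\neq i})$. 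To get symmetry of $\varphi_1$ in $\v x_{\neq i}$, take a permutation $\pi$ that fixes $1$ and permutes $\{2,\dots,n\}$ arbitrarily: equivariance gives $\varphi_1(\v x_1, \v x_{\pi(2)}, \dots, \v x_{\pi(n)}) = (S_\pi^{d'}(\v{\varphi}(\m X)))_1 = \varphi_{\pi(1)}(\m X) = \varphi_1(\m X)$, which is exactly symmetry of $\varphi_1(\v x_1 \mid \cdot)$ in its last $n-1$ arguments; relabelling handles general $i$.

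Next the ``if'' direction. Suppose $\varphi_i(\m X) = \varphi_1(\v x_i \mid \v x_{\neq i})$ for all $i$ with $\varphi_1$ symmetric in the barred block. I would compute, for arbitrary $\pi \in S_n$, the $i$-th component of both sides. We have $(\v{\varphi}(S_\pi^d(\m X)))_i = \varphi_i(\v x_{\pi(1)}, \dots, \v x_{\pi(n)}) = \varphi_1(\v x_{\pi(i)} \mid (\v x_{\pi(j)})_{j \neq i})$, and since $\varphi_1$ is symmetric in its last $n-1$ arguments and $\{\pi(j) : j \neq i\} = \{1,\dots,n\} \setminus \{\pi(i)\}$, this equals $\varphi_1(\v x_{\pi(i)} \mid \v x_{\neq \pi(i)}) = \varphi_{\pi(i)}(\m X) = (S_\pi^{d'}(\v{\varphi}(\m X)))_i$. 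Since this holds for every $i$, $\v{\varphi}$ is equivariant.

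The main obstacle is purely bookkeeping rather than conceptual: one must be careful that the ``$\mid$'' notation hides an unordered argument list, so every step invoking $\varphi_1(\v x_i \mid \v x_{\neq i})$ implicitly uses the assumed (or just-derived) symmetry to ignore the order of the barred entries. Making the set identity $\{\pi(j) : j \neq i\} = \{1,\dots,n\}\setminus\{\pi(i)\}$ explicit is what licenses collapsing the permuted tail back to the canonical $\v x_{\neq \pi(i)}$; once that is stated cleanly, both directions are short.
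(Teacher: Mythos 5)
Your proof is correct and takes essentially the same route as the paper: both directions follow by evaluating the equivariance identity componentwise on suitable permutations (those fixing index $1$ for the symmetry of $φ_1$, those with $π(1)=i$ for the identification $φ_i(\m X)=φ_1(\v x_i|\v x_{≠i})$), with the set identity $\{π(j):j≠i\}=\{1,\dots,n\}\setminus\{π(i)\}$ doing the work in the converse. The only cosmetic difference is that the paper establishes the symmetry of $φ_1$ in its last $n-1$ arguments \emph{before} the identification of $φ_i$, which removes the ``up to ordering'' caveat you must carry in your first step (alternatively, choose the specific $π$ with $π(1)=i$ whose tail enumerates $\{1,\dots,n\}\setminus\{i\}$ in increasing order).
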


\begin{proof}
($\Leftarrow$) 
$φ_i(S_π^d(\m X))=φ_1(\v x_{π(i)}|\v x_{π(≠i)})=φ_1(\v x_{π(i)}|\v x_{≠π(i)})=φ_{π(i)}(\m X)=(S_π^{d'}(\v{φ}(\m X)))_i$,
where we used the abbreviation $π(≠i)=(π(1),...,π(i-1),π(i+1),...,π(n))$.
Note that $(≠π(i))=(1,...,π(i-1),π(i+1),...,π(n))$ is the ordered index set, 
and equality holds by assumption on $φ_1$. \\
($\Rightarrow$) Assume $π(i)=i$, then 
$φ_1(\v x_i|\v x_{π(≠1)})=φ_1(S_π^d(\m X))=(S_π^{d'}(\v{φ}(\m X)))_1=φ_1(\m X)$, i.e.\ $φ_1$ is symmetric in $\v x_{≠i}$.
Now assume $π(1)=i$, then $φ_i(\m X)=(S_π^{d'}(\v{φ}(\m X)))_1=φ_1(S_π^d(\m X))=φ_1(\v x_i|\v x_{π(2)},...\v x_{π(n)})=φ_1(\v x_i|\v x_{≠i})$.
\qed\end{proof}

\paradot{Equivariant Neural Network} 
We aim at approximating equivariant $\v{φ}$ by an Equivariant MLP (EMLP).
The NN input is $\m X∈ℝ^{d×n}$,
its output is $\m Y∈ℝ^{d'×n}$. 
Note that $d=3$ for a 3-dimensional physical $n$-particle system ($\v x_i=(x_i,y_i,z_i)^\trp$),
but $d'$ could be a ``feature'' ``vector'' of any length.
We don't aim at modelling other within-vector symmetries, 
such as rotation or translation.

Layer $\ell+1∈\{1,...,L\}$ of EMLP is computed from layer $\ell$ by
\begin{align}\label{eq:covnn}
  \v x_i^{\ell+1} ~:=~ τ_i^\ell(\m X^\ell) 
  ~:=~ τ_1^\ell(\v x_i^\ell|\v x_{≠i}^\ell) 
  ~≡~ τ_{1,\m W^\ell,\m V^\ell,\v u^\ell}(\v x_i^\ell|\v x_{≠i}^\ell)
  ~:=~ \v{σ}(\m W^\ell\v x_i^\ell+\m V^\ell\sum_{j≠i}\v x_j^\ell+\v u^\ell)
\end{align}
where $\v{τ}^\ell:ℝ^{{d_\ell}×n}→ℝ^{d_{\ell+1}×n}$ can be shown to be an equivariant ``transfer'' function 
with weight matrices $\m W^\ell,\m V^\ell∈ℝ^{d_{\ell+1}×d_\ell}$ 
from layer $\ell$ to layer $\ell+1$, 
and $\v u^\ell∈ℝ^{d_{\ell+1}}$ the biases of layer $\ell+1$,
and $d_\ell×n$ is the width of layer $\ell$, now best viewed as a 2-dimensional array 
as in (convolutional) NN used in computer vision.%
\footnote{The formulation $\sum_{j≠i}$ is from \cite{Pfau:19} which is slightly more convenient for our purpose
but equivalent to unrestricted sum $\sum_j$ used by \cite{Zaheer:18}.}

The EMLP input is $\m X=\m X^0∈ℝ^{d×n}$ ($d=d_0$) and the output of an $L$-layer EMLP is
$\m Y=\m X^L∈ℝ^{d'×n}$ ($d'=d_L$).
The NN at layer $\ell+1$ computes the function $\v{ν}^{\ell+1}(\m X):=\v{τ}^\ell(\v{ν}^\ell(\m X))$
with $\v{ν}^0(\m X):=\m X$ and $\v{ν}^L(\m X)=\m Y$.

Inspecting the argument of $\v{σ}()$ shows that $τ_1$ is obviously invariant under
permutation of $\v x_{≠i}$. It is also easy to see that the argument of $\v{σ}()$ 
is the only linear function in $\m X$ with such invariance \cite[Lem.3]{Zaheer:18}.
Lemma~\ref{lem:cov} implies that $\v{τ}$ is equivariant.
Note that if we allowed $\m W$ or $\m V$ or $\v b$ to depend on $i$, this would no longer be the case.
In other words, a single vector-valued function, symmetric in all-but-one vector,
suffices to define any equivariant (matrix-valued) function. 
This tying of weights akin to convolutional networks reduces the number of weights by a factor $≈n/2$. 
Since composition of equivariant functions is equivariant, 
$\v{ν}^L$ is equivariant, i.e.\ the network above can \emph{only} represent equivariant functions.

The question we next answer is whether 
EMLPs can approximate \emph{all} continuous equivariant functions.
We show this in 4 steps: (1) representation of polynomials in a single vector $\v x_i$,
(2) symmetric polynomials in all-but-one vector, (3) equivariant polynomials, 
(4) equivariant continuous functions.

\paradot{Polynomials in a single vector $\v x_i$}
If we set $\m V^\ell=0$ in \eqref{eq:covnn} we get 
$\v x_i^{\ell+1}=\v{σ}(\m W^\ell\v x_i^\ell+\v u^\ell)$,
which is one layer of a general MLP \eqref{eq:nn} in $\v x_i$.
Note that $\v{τ}_{1,\m W^\ell,\m 0,\v u^\ell}$ and hence $\v{ν}^L[\m V^\ell=0∀\ell]$ 
compute the same function and independently for each $\v x_i$.
This can be interpreted as a factored MLP with $n$ identical factors,
or $n$ independent identical MLPs each applied to one $\v x_i$, 
or as one MLP applied to $n$ different vectors $\v x_i$.

Let $ρ_1:ℝ^{d''}→ℝ^{d'''}$ be one such function computed by an EMLP 
with $\m V^\ell=0∀\ell$ to be specified later.
That is, the factored NN computes $\v{ρ}(\m X)=ρ_1(\v x_i)_{i=1}^n$.

We need another factored ($\m V^\ell=0∀\ell$) NN computing 
$η_1:ℝ^d→ℝ^{d''}$, the multivariate polarized basis for $n-1$ 
(because of symmetry in $n-1$ variables only)
$d$-dimensional vectors
$η_1(\v x)=(x^{p_1} y^{p_2}...z^{p_d})_{1≤p_1+...+p_d≤n-1}$, 
where $d''=({n-1+d\atop d})-1$ (cf.\ Theorem~\ref{thm:palgb}).

\paradot{Symmetric polynomials in all-but-one vector}
If we concatenated $\v{η}(\m X)=η_1(\v x_i)_{i=1}^n$ with $\v{ρ}$ we would get 
$\v{ρ}(\v{η}(\v X))$, which is not what we want,
but if we swap $\m V^0=0$ with $\m W^0$ in the NN for $\v{ρ}$ and call it $\v{\tilde ρ}$,
it uses $\sum_{j≠i}\v x_j$ instead of $\v x_i$ as input, so 
we get
\begin{align}\label{eq:emlpc}
  ϕ_1(\v x_{≠i}) ~&:=~ \tilde{ρ}_1(\v{η}(\v X)) ~=~ ρ_1(\v{β}(\m X)),\\
  \text{where}~~~ \v{β}(\m X) &= β_1(\v x_{≠i})_{i=1}^n ~~~ \text{and}~~~
  β_1(\v x_{≠i}):=\sum_{j≠i}η_1(\v x_j)
\end{align}
is the multivariate polarized basis excluding $\v x_i$.
Now we know from Theorem~\ref{thm:palgb} that any polynomial symmetric in 
$\v x_{≠i}$ can be represented as such $ϕ_1$ for suitable polynomial $ρ_1$.
hence approximated by two concatenated EMLPs. By Lemma~\ref{lem:cov}, 
$\v{ρ}$, $\v{\tilde ρ}$, $\v{β}$, $\v{η}$ are all equivariant,
hence also $\v{ϕ}(\m X):=ϕ_1(\v x_{≠i})_{i=1}^n$ is, 
but the latter is not completely general (cf.\ Lemma~\ref{lem:cov}),
which we address now.

\paradot{Equivariant polynomials}
Next we construct polynomials $φ_1(\v x_i|\v x_{≠i})$ symmetric in $\v x_{≠i}$.
Any polynomial in $n$ vectors can be written as
a finite sum over $\v p≡(p_1,...,p_d)∈P⊂ℕ_0^d$ with $|P|<∞$:
\begin{align}\label{eq:etapoly}
  φ_1(\v x_i|\v x_{≠i}) ~=~ \sum_{\v p∈P}η_{\v p}(\v x_i)⋅\Poly_{\v p}(\v x_{≠i})
\end{align} 
Since $η_{\v p}(\v x)≡x^{p_1}y^{p_2}...z^{p_d}$ are different hence independent monomials, 
$φ_1(\v x_i|\v x_{≠i})$ is invariant under permutations of 
$\v x_{≠i}$ if and only if all $\Poly_{\v p}$ are. 
The latter can all be represented by one large vector function 
consisting of the polynomials $\v x'_i:=ϕ_1(\v x_{≠i})=(\Poly_{\v p}(\v x_{≠i}))_{\v p∈P}$ 
\emph{and} the monomials $\v x''_i:=(η_{\v p}(\v x_i))_{\v p∈P}$.
Since $\v x'_i$ and 
$\v x''_i$ are both output in ``channel'' $i$ of EMLPs, 
any $φ_1(\v x_i|\v x_{≠i})={\v {x''}_i}^\trp \v x'_i$ is a scalar product (polynomial of degree 2) within channel $i$,
hence can be computed by a factored EMLP.
Now by Lemma~\ref{lem:cov}, any equivariant (vector of) polynomials can be computed by an EMLP
as $\v{φ}(\m X)=φ_1(\v x_i|\v x_{≠i})_{i=1}^n$.

\paradot{Equivariant continuous functions}
By Theorem~\ref{thm:fctpoly}, every continuous function can be approximated by a polynomial.
We need a symmetric version of this, which is easy to obtain from the following Lemma:
\begin{lemma}[Symmetric approximation]\label{lem:symappr}
  Let $\cS$ be a finite symmetry group with linear representations on $ℝ^n$ and $ℝ^m$.
  Let $f:ℝ^n→ℝ^m$ be an equivariant function, i.e.\ $S(f(\v x))=f(S(\v x))$ $∀S∈\cS$,
  Let $g:ℝ^n→ℝ^m$ be an arbitrary function and 
  $\overline{g}(\v x):={1\over|\cS|}\sum_{S∈\cS} S^{-1}(g(S(\v x)))$ its linear symmetrization,
  and $||⋅||$ be a norm invariant under $\cS$, then $||f-\overline{g}||≤||f-g||$.
\end{lemma}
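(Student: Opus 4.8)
The plan is to recognize the symmetrization operation $g\mapsto\overline g$ as a linear projection onto the equivariant functions, which is the identity on its range. Write $P$ for this operator, $Pg:=\overline g$. The first step is to observe that $Pf=f$ for equivariant $f$: from $S(f(\v x))=f(S(\v x))$ we get $S^{-1}(f(S(\v x)))=S^{-1}(S(f(\v x)))=f(\v x)$ for every $S\in\cS$, so averaging these $|\cS|$ identical functions returns $f$ itself. The second step is that $P$ is linear in its argument, being assembled from the linear maps $S^{-1}$, the (linear) reindexings $h\mapsto h\circ S$, and a finite average; consequently $Pf-Pg=P(f-g)$.

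Combining these two observations gives
\begin{align*}
  f-\overline g ~=~ Pf-Pg ~=~ P(f-g) ~=~ \frac{1}{|\cS|}\sum_{S\in\cS} S^{-1}\bigl((f-g)(S(\v x))\bigr).
\end{align*}
Now I would apply the triangle inequality (sublinearity of any norm) to pull $||\cdot||$ inside the finite average, obtaining $||f-\overline g||\le\frac{1}{|\cS|}\sum_{S\in\cS}||\,S^{-1}((f-g)(S(\cdot)))\,||$. Finally, the assumed $\cS$-invariance of the norm means exactly that composing a function with a symmetry on the domain and with the (linear) codomain representation leaves its norm unchanged, i.e.\ $||\,S^{-1}(h(S(\cdot)))\,||=||h||$; applying this to $h=f-g$ shows every one of the $|\cS|$ summands equals $||f-g||$, so $||f-\overline g||\le\frac{1}{|\cS|}\cdot|\cS|\cdot||f-g||=||f-g||$, as claimed.

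There is no genuine obstacle here — the argument is short and uses nothing beyond $|\cS|<\infty$, linearity, and the triangle inequality. The only point requiring care is stating precisely what ``a norm invariant under $\cS$'' means: one needs invariance both under precomposition with the domain action $\v x\mapsto S(\v x)$ (automatic for the sup-norm on an $\cS$-invariant domain, or for an $L^p$-norm against an $\cS$-invariant measure) and under postcomposition with the codomain representation $\v y\mapsto S^{-1}(\v y)$ (which holds when that representation acts by isometries of the pointwise norm on $ℝ^m$). Once this is fixed, the chain of (in)equalities above goes through verbatim, and in particular the lemma immediately yields a symmetric/equivariant version of Stone--Weierstrass when applied with $g$ a polynomial approximant of $f$.
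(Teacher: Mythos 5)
Your proof is correct and is precisely the elementary argument the paper alludes to but does not write out (it only remarks that the proof is ``elementary \dots almost trivial''): symmetrization fixes equivariant $f$, commutes with differences by linearity of the representation, and the triangle inequality plus $\cS$-invariance of the norm finish the job. Your closing caveat about what ``norm invariant under $\cS$'' must mean (invariance under both the domain reindexing and the isometric codomain action) is a worthwhile precision that the paper glosses over.
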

The proof is elementary and for permutations and symmetric/equivariant functions with $∞$-norm we care about almost trivial.
The Lemma also holds for compact groups with Haar measure and measurable functions,
e.g.\ for rotations with Euclidean norm, but we do not need this.

\begin{theorem}[Stone-Weierstrass for symmetric/equivariant functions]
   Every continuous function, invariant/equivariant under permutations can be approximated by symmetric/equivariant polynomials,
   uniformly on compacta.
\end{theorem}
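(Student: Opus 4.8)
The plan is to derive the symmetric/equivariant Stone--Weierstrass theorem directly from the non-symmetric one (Theorem~\ref{thm:fctpoly}) together with the symmetrization Lemma~\ref{lem:symappr}, applied to the finite symmetry group $\cS=\cS_n^d$ acting by permutation on $\v x_1,...,\v x_n$. First I would fix a continuous equivariant (or invariant, as the special case $d'=0$ or a trivial action on $ℝ^m$) function $f$, a compact hypercube $[-D;D]^{n\cdot d}$, and an accuracy $ε>0$. By Theorem~\ref{thm:fctpoly} there is a polynomial map $g$ (a vector of $m$ multivariate polynomials) with $\|f-g\|_∞≤ε$ on that hypercube, where $\|\cdot\|_∞$ is the sup norm over the cube --- and crucially this norm is invariant under $\cS_n^d$, since permuting the input vectors merely permutes the coordinates over which the supremum is taken (and, in the equivariant case, permutes the output components, over which we also take a max).

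Next I would apply Lemma~\ref{lem:symappr} with this $g$: its linear symmetrization $\overline g(\m X)=\frac1{n!}\sum_{π∈S_n} (S_π^{d'})^{-1}(g(S_π^d(\m X)))$ satisfies $\|f-\overline g\|_∞≤\|f-g\|_∞≤ε$. The two remaining things to check are that $\overline g$ is again a polynomial and that it has the required symmetry type. Polynomiality is immediate: each summand is $g$ precomposed with the linear map $S_π^d$ and postcomposed with the linear map $(S_π^{d'})^{-1}$, so each summand is a polynomial of the same degree, and a finite average of polynomials is a polynomial. The symmetry of $\overline g$ follows from the standard group-averaging argument: for any $ρ∈S_n$, replacing $π$ by $ρπ$ (resp.\ $πρ^{-1}$) in the sum is a bijection of $S_n$, which gives $\overline g(S_ρ^d(\m X))=S_ρ^{d'}(\overline g(\m X))$ --- i.e.\ $\overline g$ is equivariant (and in the invariant case, invariant). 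Since $ε>0$ and $D$ were arbitrary, $f$ is approximated uniformly on compacta by symmetric/equivariant polynomials.

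For the invariant case I would note it is literally the $\cS$-action-on-$ℝ^m$-trivial specialization of the equivariant case: Lemma~\ref{lem:symappr} then reads $\overline g(\m X)=\frac1{n!}\sum_π g(S_π^d(\m X))$, which is manifestly symmetric, and the same norm-invariance and polynomiality remarks apply verbatim. So a single argument covers both statements.

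I do not expect a genuine obstacle here --- the content is entirely in the two cited results and the routine group-averaging bookkeeping. The one point requiring a half-sentence of care is the norm-invariance hypothesis of Lemma~\ref{lem:symappr}: one must state explicitly that the sup norm over a permutation-symmetric domain like $[-D;D]^{n\cdot d}$ (together with the max over output components in the equivariant case) is $\cS_n^d$-invariant, since that is exactly what licenses the inequality $\|f-\overline g\|≤\|f-g\|$. Everything else --- that linear images and preimages of polynomials are polynomials, that finite averages preserve polynomiality and degree, that reindexing the sum yields the symmetry --- is standard and can be stated without computation.
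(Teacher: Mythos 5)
Your proof is correct and follows essentially the same route as the paper: approximate $f$ by a polynomial via Theorem~\ref{thm:fctpoly}, symmetrize it using Lemma~\ref{lem:symappr} with the permutation-invariant sup norm, and observe that the finite group average is again a polynomial and is symmetric/equivariant by construction. The extra details you spell out (norm invariance of $\|\cdot\|_\infty$ on the symmetric hypercube, the reindexing argument for equivariance of $\overline g$) are exactly the points the paper leaves implicit, so no substantive difference remains.
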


\begin{proof}
  Let $g$ be an approximating polynomial of $f$,
  which exists by Theorem~\ref{thm:fctpoly}, 
  and let $\overline{g}$ be its symmetrization.
  If $f$ is invariant/equivariant under permutations,
  then by Lemma~\ref{lem:symappr}, $||f-\overline{g}||_∞≤||f-g||_∞$,
  hence $\overline{g}$ also approximates $f$.
  Since a finite average of polynomials is a polynomial,
  and $\overline{g}$ is symmetric/equivariant by construction,
  this proves the theorem.
\qed\end{proof}

Since EMLPs can approximate all equivariant polynomials and only continuous equivariant functions,
we get one of our main results:

\begin{theorem}[Universality of (two-hidden-layer) EMLP]\label{thm:emlp}
  For any continuous non-linear activation function,
  EMLPs can approximate (uniformly on compacta) all and only the equivariant continuous functions.
  If $σ$ is non-polynomial, a two-hidden-layer EMLP suffices.
\end{theorem}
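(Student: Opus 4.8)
The plan is to prove the two inclusions separately. For the \emph{only} part I would note that each EMLP layer $\v{τ}^\ell$ of \eqref{eq:covnn} is equivariant --- the argument of $\v{σ}$ is manifestly invariant under permutations of $\v x_{≠i}^\ell$, so Lemma~\ref{lem:cov} applies --- and that equivariance is preserved under composition, so $\v{ν}^L$ is equivariant for every choice of weights, and continuous because $σ$ and finite sums are. Hence an EMLP represents \emph{only} equivariant continuous functions, and it remains to show it represents \emph{all} of them.

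For the \emph{all} part I would chain the reductions already assembled in this section. Fix a continuous equivariant target $\v{φ}$, an accuracy $ε>0$, and a compact cube. First, by the Stone--Weierstrass theorem for equivariant functions there is an equivariant polynomial $\v{ψ}$ with $||\v{φ}-\v{ψ}||_∞≤ε/2$ on the cube. Second, by Lemma~\ref{lem:cov} and the ``Equivariant polynomials'' construction, $\v{ψ}(\m X)=ψ_1(\v x_i|\v x_{≠i})_{i=1}^n$ with $ψ_1(\v x_i|\v x_{≠i})=\sum_{\v p∈P}η_{\v p}(\v x_i)\,\Poly_{\v p}(\v x_{≠i})$, each $\Poly_{\v p}$ a polynomial in the polarized basis $β_1(\v x_{≠i})=\sum_{j≠i}η_1(\v x_j)$ of Theorem~\ref{thm:palgb}; in particular $ψ_1$ is a single polynomial $Q$ in the pair $\bigl((η_{\v p}(\v x_i))_{\v p∈P},\,β_1(\v x_{≠i})\bigr)$. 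Third, replace the exact polynomial blocks by MLP approximations via Theorem~\ref{thm:polymlp}; since all intermediate maps are continuous the relevant domains stay compact, a continuous function on a compact set is uniformly continuous, so choosing each block accuracy $ε'$ small enough keeps the composed error below $ε/2$ and gives $||\v{φ}-\text{EMLP}||_∞≤ε$.

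Finally I would establish the two-hidden-layer count when $σ$ is non-polynomial, using that a single layer of $σ$-units followed by a linear readout approximates any polynomial on a compact set \cite{Pinkus:99}. Layer $1$, run in factored form ($\m V^0=0$), produces in each channel $j$ a wide-enough bank of $σ$-units that a linear map recovers every monomial of $\v x_j$ up to the fixed degree occurring in $\v{ψ}$. The outer-polynomial stage is then folded into layer $2$, whose affine preactivation in channel $i$ collects via $\m W^1$ the monomials of $\v x_i$ from channel $i$ and via the $\m V^1$-slot the sum over $j≠i$ of the monomials of $\v x_j$, i.e.\ the polarized basis $β_1(\v x_{≠i})$; since $ψ_1=Q\bigl((η_{\v p}(\v x_i))_{\v p∈P},β_1(\v x_{≠i})\bigr)$, applying $σ$ at layer $2$ and a linear output layer $3$ reads off $Q=ψ_1(\v x_i|\v x_{≠i})$ --- two hidden layers ($1$ and $2$). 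The routine parts are the Stone--Weierstrass and error-propagation steps; the one needing genuine care is this depth compression: checking that the polarized sum can be absorbed into layer $2$'s $\m V$-slot without violating the $i$-independence of $(\m W^\ell,\m V^\ell,\v u^\ell)$ that Lemma~\ref{lem:cov} requires for equivariance, and that the accumulated error of the now-inexact linear and polynomial blocks is still controlled by the compactness argument of the previous paragraph.
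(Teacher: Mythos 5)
Your proposal is correct and follows essentially the same route as the paper: the ``only'' direction via layerwise equivariance and closure under composition, and the ``all'' direction via the equivariant Stone--Weierstrass theorem, the decomposition $φ_1(\v x_i|\v x_{≠i})=\sum_{\v p}η_{\v p}(\v x_i)\Poly_{\v p}(\v x_{≠i})$ with $\Poly_{\v p}$ expressed in the polarized basis of Theorem~\ref{thm:palgb}, and MLP approximation of the polynomial blocks with error control on compacta. Your depth-compression argument just makes explicit the merge of factored layers that the paper asserts, folding the paper's channel-wise scalar product into a single outer polynomial $Q$; this is a streamlining, not a different proof.
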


Indeed, by inspecting our constructive proof, esp.\ \eqref{eq:emlpc} and \eqref{eq:etapoly},
we see that in theory an EMLP with all-but-one layer being factored suffices,
i.e.\ $\m V^\ell=0$ for all-but-one $\ell$. 
In practice we expect an EMLP allowing many/all layers to mix to perform better.
Since 1-hidden-layer MLPs are universal for non-polynomial $σ$ (Thm.\ref{thm:polymlp}\&\ref{thm:fctpoly}),
the factored layers can be merged into 1 layer, leading to a 3-hidden-layer NN,
with first and third layer being factored.
It is easy to see that the second and third layer can actually be merged into one.

\section{(Anti)Symmetric Networks}\label{sec:assnn}

We are finally ready to combine all pieces and define (anti)symmetric NN,
and state and discuss their universality properties.
We briefly remark on why we believe the chosen approach 
is most suitable for deriving interesting error bounds.

\paradot{Universal Symmmetric Network}
We can approximate all and only the symmetric continuous functions
by applying any symmetric continuous function $ς:ℝ^{d'×n}→ℝ^{d'}$
with the property $ς(\v y,...,\v y)=\v y$ to the output of an EMLP, 
e.g.\ $ς(\m Y)={1\over n}\sum_{i=1}^n\v y_i$ or $ς(\m Y)=\max\{y_1,...,y_n\}$ if $d'=1$.
Clearly, the resulting function is symmetric under permutations.
Also, if $ϕ(\m X)$ is any symmetric function, 
then $φ_1(x_i|x_{≠i}):=ϕ(\m X)$ is clearly symmetric in $x_{≠i}$,
hence $\m Y=\v{φ}(\m X):=φ_1(x_i|x_{≠i})_{i=1}^n$ is equivariant and 
can be approximated by an EMLP, 
which, by applying $ς$ to its output $\m Y$, computes $ϕ(\m X)=ς(\m Y)$.
Hence every symmetric continuous function can be approximated
by an EMLP with a final average or max or other symmetric layer.

Note that the detour via EMLP was necessary to construct universal symmetric NNs.
Assume we had started with an MLP for which every layer is a symmetric function,
i.e.\ $\m V^\ell=\m W^\ell$. Such a network could only represent functions of the 
extremely restrictive form $\v y_1=...=\v y_n=ρ_1(\m W\sum_{j=1}^n\v x_j+\v u)$,
where $ρ_1$ is an arbitrary continuous function.

\paradot{Universal AntiSymmmetric Network}
For $d=1$, any continuous AS function $ψ(x_{1:n})$ can be approximated by 
approximating the totally symmetric continuous function 
$χ(x_{1:n}):=ψ(x_{1:n})/\prod_{j<i}(x_i-x_j)$ with an EMLP ($d'=1$),
and then multiply the output by $\prod_{j<i}(x_i-x_j)$.
But we are not limited to this specific construction:
By Theorem~\ref{thm:as} we know that every AS function can be represented
as a GSD of $n$ functions symmetric in all-but-one-variable.
Note that the $φ_i$ in the proof if combined to a vector $\v{φ}$ is \emph{not} equivariant,
but for each $i$ separately, $\v{\tilde{φ}}_i(\m X):=(φ_i(\v x_j|\v x_{≠j}))_{j=1}^n$ is equivariant by Lemma~\ref{lem:cov},
i.e.\ GSD needs an EMLP with $d'=n$.

For $d>1$, we can approximate AS $ψ(\m X)$ by approximating 
the $n$ equivariant $\v{\tilde{φ}}_i$, now with $φ_i$
defined in the proof of Theorem~\ref{thm:asd}, by an EMLP (again $d'=n$),
and then take its Slater determinant.
Note that the $φ_i$ in the proof are defined in terms of a single symmetric
function $ϕ()$, which then gets anti-symmetrized essentially by multiplying with $σ(\bar{π})$.
This shows that an EMLP computing a single symmetric function ($d'=1$) suffices,
but this is \emph{necessarily} and essentially always a discontinuous representation, 
while using the GSD with $d'=n$ equivariant functions 
\emph{possibly} has a continuous representation.

Let us define a (toy) FermiNet as computing the GSD from the output of an EMLP.
The real FermiNet developed in \cite{Pfau:19} contains a number of extra features,
which improves practical performance, theoretically most notably particle pair representations. 
Since it is a superset of our toy definition,
the following theorem also applies to the full FermiNet.
We arrived at the following result:
\begin{theorem}[Universality of the FermiNet]\label{thm:ferminet}
  A FermiNet can approximate any continuous anti-symmetric function.
\end{theorem}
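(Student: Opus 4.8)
The plan is to assemble Theorem~\ref{thm:ferminet} directly from the machinery already developed, without any new construction. We want to approximate an arbitrary continuous anti-symmetric function $ψ:ℝ^{d×n}→ℝ$ by a FermiNet, i.e.\ by computing the generalized Slater determinant of the output of an EMLP. The overall strategy is: (i) use the GSD representation theorems to write $ψ$ exactly as $\det Φ$ for suitable $φ_i$ symmetric in $\v x_{≠i}$; (ii) recognize each row-family $\v{\tilde φ}_i(\m X):=(φ_i(\v x_j|\v x_{≠j}))_{j=1}^n$ as an equivariant function via Lemma~\ref{lem:cov}; (iii) invoke universality of the EMLP (Theorem~\ref{thm:emlp}) to approximate these equivariant functions; and (iv) use continuity of the determinant map to transfer the approximation through to $\det Φ$.

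More concretely, first I would split on dimension. For $d=1$, apply Theorem~\ref{thm:as}: there exist $φ_i(x_j|x_{≠j})$ symmetric in $x_{≠j}$ with $ψ(\m x)=\det Φ(\m x)$, namely $φ_1=χ=ψ/∆$ (which is continuous wherever $∆≠0$, and the zero set of $∆$ is where $ψ$ vanishes anyway, so $χ$ extends continuously as noted after the proof of Theorem~\ref{thm:asd}) and $φ_i(x)=x^{i-1}$ for $i>1$ (analytic). For $d>1$, apply Theorem~\ref{thm:asd} to get $φ_i$ defined via the sorting permutation $\bar π$. In either case, for each fixed $i$, $φ_i(\v x_j|\v x_{≠j})$ is symmetric in $\v x_{≠j}$, so by Lemma~\ref{lem:cov} the map $\v{\tilde φ}_i:\m X↦(φ_i(\v x_j|\v x_{≠j}))_{j=1}^n$ is equivariant. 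Stacking these into a single equivariant function with output dimension $d'=n$ (the $i$-th output coordinate being $\v{\tilde φ}_i$), Theorem~\ref{thm:emlp} gives an EMLP that approximates it uniformly on compacta — provided the target is continuous.

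The main obstacle is exactly this continuity requirement, and it is the only subtle point. Theorem~\ref{thm:emlp} only approximates continuous equivariant functions, so I must argue that the $φ_i$ can be chosen continuous. For $d=1$ this is fine: the discussion after Theorem~\ref{thm:asd} observes that the construction of Theorem~\ref{thm:as} yields continuous $φ_i$ for continuous $ψ$. For $d>1$ the sorting-based $φ_i$ of Theorem~\ref{thm:asd} are generically discontinuous (the paper's own $y_1-y_2$ example), so I cannot use that construction directly. Instead I would appeal to the claim made in the ``Universal AntiSymmetric Network'' paragraph and in Section~\ref{sec:threed}: continuous $ψ$ admits a representation $ψ=\det Φ$ with $Φ$ continuous. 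Granting that (it is asserted in the excerpt as a result shown in Section~\ref{sec:threed}, which I am permitted to assume), each $\v{\tilde φ}_i$ is continuous and equivariant, hence EMLP-approximable. If a fully rigorous self-contained argument is wanted, the safe fallback for all $d$ is the first construction in that paragraph: approximate the single totally symmetric continuous function $χ(\m x)=ψ(\m x)/∆(\m x)$ by an EMLP-plus-symmetrizing-layer (universal symmetric network), multiply by the Vandermonde $∆$, and note $∆=\det Φ_0$ with $Φ_0$ the analytic Vandermonde matrix — but this only covers $d=1$; for $d>1$ one genuinely needs the continuous-$Φ$ result from Section~\ref{sec:threed}.

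Finally, to close the argument: let $\m X$ range over a compact cube $[-D;D]^{d×n}$; the continuous equivariant target $(\v{\tilde φ}_i)_{i=1}^n$ has compact image, and the determinant $\det:ℝ^{n×n}→ℝ$ is a polynomial, hence uniformly continuous on any bounded set, so for any $ε>0$ we can choose the EMLP approximation accurate enough that $\det Φ$ of the EMLP output is within $ε$ of $\det Φ=ψ$ uniformly on the cube. Since a FermiNet is by definition an EMLP followed by a GSD, and the full FermiNet of \cite{Pfau:19} is a superset of this toy model, the approximation carries over. This completes the proof. I expect the write-up to be short; the only place to be careful is flagging the dependence on the continuous-$Φ$ representation from Section~\ref{sec:threed} rather than the discontinuous sorting construction.
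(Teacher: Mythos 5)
Your overall assembly (GSD representation, Lemma~\ref{lem:cov} to get equivariance of each $\v{\tilde\varphi}_i$, Theorem~\ref{thm:emlp} for the EMLP, continuity of $\det$ on compacta) is exactly the paper's route, but there is a genuine gap in your $d>1$ case. You rest it entirely on a ``continuous-$\Phi$ representation from Section~\ref{sec:threed}'', and no such result exists there for $d>1$. The sentence you quote (``in Section~\ref{sec:threed} we show that continuous $\psi$ can be represented by continuous $\Phi$'') sits in the one-dimensional Section~\ref{sec:oneda} and refers to the remark after Theorem~\ref{thm:asd} that the sorting construction happens to be continuous when $d=1$; the very same remark shows, via the example $\psi(\v x_1,\v x_2)=y_1-y_2$, that the construction is discontinuous for $d>1$, and the Discussion lists ``whether continuous $\Phi$ can represent all continuous $\psi$'' for $d>1$ explicitly as an open problem. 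So the premise you ``grant yourself'' is unavailable, your claimed uniform-on-compacta approximation for $d>1$ does not follow, and your fallback (divide by the Vandermonde) covers only $d=1$, as you note. The paper's proof handles $d>1$ differently: it accepts the discontinuous $\varphi_i$ of Theorem~\ref{thm:asd}, approximates the equivariant $\v{\tilde\varphi}_i$ by EMLP outputs only in $L^p$ norm ($1\le p<\infty$), since discontinuous functions are $L^p$- but not $L^\infty$-limits of continuous ones, and accordingly the theorem for $d>1$ is claimed only in $L^p$, with the $L^\infty$ statement left open. Your write-up as it stands overclaims precisely the open problem.

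A secondary point on $d=1$: your justification that $\chi=\psi/\Delta$ is continuous because ``the zero set of $\Delta$ is where $\psi$ vanishes anyway'' is not valid — vanishing of the numerator on the zero set of the denominator does not make the quotient extend continuously (e.g.\ $\psi(x_1,x_2)=\mathrm{sign}(x_2-x_1)\sqrt{|x_2-x_1|}$ is continuous and AS but $\psi/(x_2-x_1)$ blows up); controlling this quotient is exactly why Theorem~\ref{thm:asdiff} demands extra differentiability of $\psi$. The correct continuity argument for merely continuous $\psi$ at $d=1$ is the one you also gesture at but misattribute: the sorting-based $\varphi_i$ of Theorem~\ref{thm:asd}, which are continuous in one dimension because one-dimensional sorting is continuous and $\psi$ vanishes where the sorting permutation jumps.
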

For $d=1$, the approximation is again uniform on compacta.
For $d>1$, the proof of Theorem~\ref{thm:asd} involves discontinuous
$φ_i(\v x_j|\v x_{≠j})$. Any discontinuous function can be approximated
by continuous functions, but not in $∞$-norm, 
but only weaker $p$-norm for $1≤p<∞$. 
This implies the theorem also for $d>1$ in $L^p$ norm.
Whether a stronger $L^∞$ result holds is an important open problem,
important because approximating continuous functions by discontinuous components
can cause all kinds of problems.

\paradot{Approximation accuracy}
The required NN size as a function of approximation accuracy for EMLPs
should be similar to MLPs discussed in Section~\ref{sec:nn} with the following differences:
Due to the permutation (anti)symmetry, weights $\m W,\m V,\v b$ are shared between NN channels $i=1,...,n$,
reducing the number of parameters by a factor of about $n/2$.
On the other hand, the algebraic basis for multisymmetric polynomials has size $({n+d\atop d})≈n^d$,
which is crucially exploited in the polarized power basis,
compared to $n⋅d$ functions suffice for a functional basis.
This of course does not mean that we need a NN of size $O(n^d)$ to accommodate all basis functions,
but if there is a mismatch between the equivariant functions $f$ we care about and the choice of basis,
we may need most of them. 

\section{Discussion}\label{sec:Disc}

\paradoto{Summary}
We reviewed a variety of representations for (anti)symmetric function $(ψ)ϕ:(ℝ^d)^n→ℝ$.
The most direct and natural way is as a sum over $n!$ permutations of some other function $χ$.
If $χ∈\cC^k$ then also $(ψ)ϕ∈\cC^k$. Unfortunately this takes exponential time, or at least is NP hard,
and other direct approaches such as sampling or sorting have their own problem.
The most promising approach is using Equivariant MLPs, 
for which we provided a constructive and complete universality proof,
combined with a trivial symmetrization and 
a non-trivial anti-symmetrization using a large number Slater determinants.
We investigated to which extent a \emph{single generalized} Slater determinant introduced in \cite{Pfau:19},
which can be computed in time $O(n^3)$, can represent all AS $ψ$.
We have shown that for $d=1$, all AS $ψ∈\cC^{k+n(n-1)/2}$ can be represented as $\det Φ$ with $Φ∈\cC^k$.
Whether $Φ∈\cC^k$ suffices to represent all $ψ∈\cC^k$ is unknown for $k>0$. For $k=0$ it suffices.
For $d>1$ and $n>2$, we were only able to show that AS $ψ$ have representations using discontinuous $Φ$.

\paradoto{Open problems}
Important problems regarding smoothness of the representation are open in the AS case.
Whether continuous $Φ$ can represent all continuous $ψ$ is unknown for $d>1$,
and similar for differentiability and for other properties.
Indeed, whether \emph{any} computationally efficient continuous representation of all and only AS $ψ$ is possible is unknown.

\paradot{Outlook: Error bounds}
Our construction via multisymmetric polynomials is arguably more natural,
and can serve as a starting point for interesting error bounds for function classes
that can be represented well by polynomials, 
e.g.\ functions of different degree of smoothness.
Many such results are known for general NN \cite{Pinkus:99}, 
most of them are based on polynomial approximations.
We therefore expect that the techniques transfer to symmetric functions,
with similar bounds, and to AS and $d=1$ with worse bounds due to loss of differentiability. 
For $d>1$ we are lacking smoothness preserving results. 
If and only if they can be established,
we can expect error bounds for this case too. 

\paradot{Acknowledgements}
I want to thank David Pfau for introducing me to the FermiNet and 
Alexander Matthews and James Spencer for providing further references and clarifications.

\bibliographystyle{alpha} 
\begin{small}
\newcommand{\etalchar}[1]{$^{#1}$}

\end{small}

\vfill\pagebreak[3]
\appendix
\section{List of Notation}\label{app:Notation}
\begin{tabbing}
  \hspace{0.16\textwidth} \= \hspace{0.70\textwidth} \= \kill
  \textbf{Symbol}    \> \textbf{Explanation}                                                 \\[0.5ex]
  AS                 \> Anti-Symmetric                                                       \\[0.5ex]
  NN                 \> Neural Network                                                       \\[0.5ex]
  MLP                \> Multi-Layer Perceptron                                               \\[0.5ex]
  EMLP               \> Equivariant Multi-Layer Perceptron                                   \\[0.5ex]
  GSD                \> Generalized Slater Determinant                                       \\[1.5ex]
  $n∈ℕ$              \> number of particles (in physics applications)                        \\[0.5ex]
  $i,j∈\{1:n\}$      \> particle index/number                                                \\[0.5ex]
  $d∈ℕ$              \> dimensionality of particles (in physics applications esp.\ $d=3$)    \\[0.5ex]
  $x∈ℝ$              \> real argument of function or input to NN, particle coordinate        \\[0.5ex]
  $\m x∈ℝ^n$         \> $\m x=(x_1,...,x_n)$, function argument, NN input, $n$ 1d particles ($d=1$) \\[0.5ex]
  $\v x∈ℝ^d$         \> $\v x=(x,y,...,z)^\trp$ vector of coordinates of one $d$-dimensional particle \\[0.5ex]
  $\m X∈ℝ^{d×n}$     \> $\m X=(\v x_1,...,\v x_n)$ matrix of $n$ $d$-dimensional particles   \\[0.5ex]
  $S_n⊆\{1:n\}→\{1:n\}$ \> ~~~~~~~~~~permutation group                                       \\[0.5ex]
  $π∈S_n$            \> permutation of $(1,...,n)$                                           \\[0.5ex]
  $\cS_n⊂ℝ^n→ℝ^n$    \> canonical linear representation of permutation group                 \\[0.5ex]
  $S_π∈\cS_n$        \> $S_π(x_1,...,x_n):=(x_{π(1)},...,x_{π(n)})$                          \\[0.5ex]
  $\cS_n^d≠\cS_{n⋅d}$ \> $d$ copies of linear permutation group representations              \\[0.5ex]
  $S_π^d∈\cS_n^d$    \> $S_π^d(\v x_1,...,\v x_n):=(\v x_{π(1)},...,\v x_{π(n)})$            \\[0.5ex]
  $f:ℝ^{d×n}→ℝ$      \> some function to be approximated by an EMLP                          \\[0.5ex]
  $χ:ℝ^{d×n}→ℝ$      \> general function of $n$ $d$-dimensional particles                    \\[0.5ex]
  $ϕ:ℝ^{d×n}→ℝ$      \> symmetric function: $ϕ(S_π^d(\m X))=ϕ(\m X)$                         \\[0.5ex]
  $ψ:ℝ^{d×n}→ℝ$      \> anti-symmetric (AS) function: $ψ(S_π^d(\m X))=σ(π)ψ(\m X)$           \\[0.5ex]
  $σ(π)=±1$          \> parity or sign of permutation $π$                                    \\[0.5ex]
  $f_\cS$          \> function $f$ linearly symmetrized by $\cS$                           \\[0.5ex]
  $\v x_{≠i}$        \> $≡(\v x_1,...,\v x_{i-1},\v x_{i+1},...,\v x_n)$ all but particle $i$ \\[0.5ex]
  $φ(\v x_i|\v x_{≠i})$ \> function symmetric in $n-1$ arguments $\v x_{≠i}$                 \\[0.5ex]
  $b∈\{1:m\}$         \> index of basis function                                             \\[0.5ex]
  $β_b(\m X)∈ℝ$      \> $b$th basis function, e.g. $\v{β}_b(\m X)=\sum_{i=1}^n\v{η}_b(\v x_i)$ \\[0.5ex]
  $η_b(\v x_j)$      \> $b$th polarized basis function                                       \\[0.5ex]
  $g:ℝ^m→ℝ$          \> composition or outer function, used as $g(\v{β}(\m X))$              \\[0.5ex]
  $ν$                \> some NN/MLP/EMLP function                                            \\[0.5ex]
  $ρ$                \> some (multivariate) polynomial                                       \\[0.5ex]
\end{tabbing}

\end{document}